\theoremstyle{plain}
\newtheorem{theorem}{Theorem}[section]
\theoremstyle{definition}
\newtheorem{assumption}[theorem]{Assumption}
\theoremstyle{remark}
\newcommand{\bs}{\boldsymbol}
\newcommandx{\todoll}[2][1=]{\todo[linecolor=blue,backgroundcolor=blue!25,bordercolor=blue,#1]{#2}}
\icmltitlerunning{Differentiable Solver Search for Fast Diffusion Sampling}
\begin{document}
\twocolumn[
\icmltitle{Differentiable Solver Search for Fast Diffusion Sampling}

\begin{icmlauthorlist}
\icmlauthor{Shuai Wang}{nju}
\icmlauthor{Zexian Li}{alibaba}
\icmlauthor{Qipeng Zhang}{alibaba}
\icmlauthor{Tianhui Song}{nju}
\icmlauthor{Xubin Li}{alibaba} \\
\icmlauthor{Tiezheng Ge}{alibaba}
\icmlauthor{Bo Zheng}{alibaba}
\icmlauthor{Limin Wang}{nju,ailab}

\end{icmlauthorlist}
\icmlaffiliation{nju}{State Key Lab of Novel Software Technology, Nanjing University, Nanjing, China.}
\icmlaffiliation{alibaba}{Taobao \& Tmall Group of Alibaba, Hangzhou, China.}
\icmlaffiliation{ailab}{Shanghai AI Lab, Shanghai, China.}
\icmlcorrespondingauthor{Limin Wang}{lmwang@nju.edu.cn}

\begin{center}
    \centering
    \resizebox{1.0\textwidth}{!}{\includegraphics{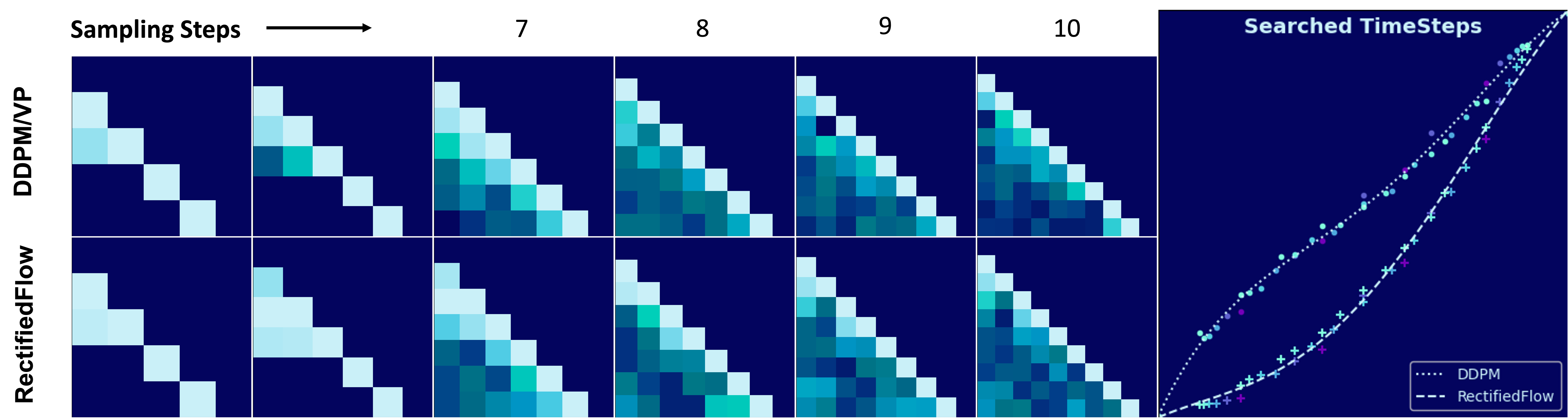}}
    \captionsetup{type=figure}
    \vspace{-2mm}
    \caption{\textbf{Visualization of searched Solver Parameters of DDPM/VP and Rectified Flow. } {\small We limited the order of solver coefficients of the last two steps for 5/6 NFE. The left images show the absolute value of searched coefficients $\{ c_i^j \}$. The right image shows the searched timesteps of different NFE and fitted curves.}}
    \label{fig:vis_solver}
\end{center}
]
\printAffiliationsAndNotice{}
\begin{abstract}
Diffusion models have demonstrated remarkable generation quality, but at the cost of numerous function evaluations. Advanced ODE-based solvers have recently been developed to mitigate the substantial computational demands of reverse-diffusion solving under limited sampling steps. However, these solvers, heavily inspired by Adams-like multistep methods, rely solely on t-related Lagrange interpolation. We show that t-related Lagrange interpolation is suboptimal for diffusion models and reveal a compact search space comprised of time steps and solver coefficients. Building on our analysis, we propose a novel differentiable solver search algorithm to identify more optimal solver. Equipped with the searched solver, rectified-flow models, e.g., SiT-XL/2 and FlowDCN-XL/2, achieve FID scores of 2.40 and 2.35, respectively, on ImageNet-$256\times256$ with only 10 steps. Meanwhile, DDPM model, DiT-XL/2, reaches a FID score of 2.33 with only 10 steps. Notably, our searched solver significantly outperforms traditional solvers(even some distillation methods). Moreover, our searched solver demonstrates generality across various model architectures, resolutions, and model sizes.
\end{abstract}
\section{Introduction}
\vspace{-0.5em}

Image generation is a fundamental task in computer vision research, which aims at capturing the inherent data distribution of original image datasets and generating high-quality synthetic images through distribution sampling. Diffusion models~\cite{ddpm, vp, edm, flow, flow2, decouple_dit} have recently emerged as highly promising solutions to learn the underlying data distribution in image generation, outperforming GAN-based models~\cite{largegan, styleganxl} and Auto-Regressive models~\cite{maskgit} by a significant margin. 

However, diffusion models necessitate numerous denoising steps during inference, which incur a substantial computational cost, thereby limiting the widespread deployment of pre-trained diffusion models. To achieve fast diffusion sampling, the existing studies have explored two distinct approaches. Training-based techniques by distilling the fast ODE trajectory into the model parameters, thereby circumventing redundant refinement steps. In addition, solver-based methods \cite{dpmsolver++, deis, ddim} tackle the fast sampling problem by designing high-order numerical ODE solvers.

For training-based acceleration, \cite{step_distill} aligns the single-step student denoiser with the multi-step teacher output, thereby reducing inference burdens. The consistency model concept, introduced by \cite{cm}, directly teaches the model to produce consistent predictions at any arbitrary timesteps. Building upon \cite{cm}, subsequent works \cite{tcd, ctm, pcm, dmm} propose improved techniques to mitigate discreet errors in LCM training. Furthermore, \cite{lightning, diffusion2gan, dmd, sid, deqdet} leverage adversarial training and distribution matching to enhance the quality of generated samples. To improve the training efficiency of distribution matching. However, training-based methods introduce changes to the model parameters, resulting in an inability to fully exploit the pre-training performance.

Solver-based methods rely heavily on the ODE formulation in the reverse-diffusion dynamics and hand-crafted multi-step solvers. \cite{dpmsolver++, dpmsolver} and \cite{deis}  point out the semi-linear structure of the diffusion ODE and propose an exponential integrator to tackle faster sampling in diffusion models. \cite{unipc} further enhances the sampling quality by borrowing the predictor-corrector structure. Thanks to the multistep-based ODE solver methods, high-quality samples can be generated within as few as 10 steps. To further improve efficiency, \cite{correct} tracks the backward error and determines the adaptive step. Moreover, \cite{edm, dpmsolver} propose a handcrafted timesteps scheduler to sample respaced timesteps. \cite{top} argues that timesteps sampled in \cite{edm, dpmsolver} are suboptimal, thus proposing an online optimization algorithm to find the optimal sampling timesteps for generation. Apart from timesteps optimization, \cite{bespoke} learns a specific path transition to improve the sampling efficiency.

In contrast to training-based acceleration methods, solver-based approaches do not necessitate parameter adjustments and preserve the optimal performance of the pre-trained model. Moreover, solvers can be seamlessly applied to any arbitrary diffusion model trained with a similar noise scheduler, offering a high degree of flexibility and adaptability. This motivates us to investigate the generative capabilities of pre-trained diffusion models within limited steps from a diffusion solver perspective. 

Current state-of-the-art diffusion solvers \cite{dpmsolver++, unipc} adopt Adams-like multi-step methods that use the Lagrange interpolation function to minimize integral errors. We argue that an optimal solver should be tailored to specific pre-trained denoising functions and their corresponding noise schedulers. In this paper, we explore solver-based methods for fast diffusion sampling by improving diffusion solvers using data-driven approaches without destroying the pre-training internality in diffusion models. Inspired by \cite{top}, we investigate the sources of error in the diffusion ODE and discover that the interpolation function form is inconsequential and can be reduced to coefficients. Furthermore, we define a compact search space related to the timesteps and solver coefficients. Therefore, we propose a differentiable solver search method to identify the optimal parameters in the compact search space.

Based on our novel differentiable solver search algorithm, we investigate the upper bound performance of pre-trained diffusion models under limited steps. Our searched solver significantly improves the performance of pre-trained diffusion models, and outperforms traditional solvers with a large gap. On ImageNet-$256\times256$, armed with our solver, rectified-flow models of SiT-XL/2 and FlowDCN-XL/2 achieve 2.40 and 2.35 FID respectively under 10 steps, while DDPM model of DiT-XL/2 achieves 2.33 FID. Surprisingly, our findings reveal that when equipped with an optimized high-order solver, the DDPM can achieve comparable or even surpass the performance of rectified flow models under similar NFE constraints.

To summarize, our contributions are
\begin{itemize}
    \item We reveal that the interpolation function choice is not important and can be reduced to coefficients through the pre-integral technique. We demonstrate that the upper bound of discretization error in reverse-diffusion ODE is related to both timesteps and solver coefficients and define a compact solver search space.
    \item Based on our analysis, we propose a novel differentiable solver search algorithm to find the optimal solver parameter for given diffusion models.
    \item For DDPM/VP time scheduling, armed with our searched solver, DiT-XL/2 achieves 2.33 FID under 10 steps, beating DPMSolver++/UniPC by a significant margin.
    \item For Rectified-flow models, armed with our searched solver, SiT-XL/2 and FlowDCN-XL/2 achieve 2.40 and 2.35 FID respectively under 10 steps on ImageNet-$256\times256$.
    \item For Text-to-Image diffusion models like FLUX, SD3, PixArt-$\Sigma$, our solver searched on ImageNet-$256\times256$ consistently yields better images compared to traditional solvers with the same CFG scale. 
\end{itemize}
\section{Related Work}
\textbf{Diffusion Model} gradually adds ${\bs x_0}$ with Gaussian noise $\epsilon$ to perturb the corresponding known data distribution $p(x_0)$ into a simple Gaussian distribution. The discrete perturbation function of each $t$ satisfies $\mathcal{N}({\bs x}_t|\alpha_t {\bs x}_0, \sigma_t^2 {\bs I})$, where $\alpha_t, \sigma_t > 0$. It can also be written as \cref{eq:ddpm}.
\begin{equation}
    {\bs x}_t = \alpha_t {\bs x}_\text{real} + \sigma_t {\bs \epsilon} \label{eq:ddpm}
\end{equation}
Moreover, as shown in \cref{eq:forward_sde}, \cref{eq:ddpm} has a forward continuous-SDE description, where $f(t) = \frac{\mathrm{d}\log \alpha_t}{\mathrm{d}t}$ and $ g(t) = \frac{\mathrm{d} \sigma_t^2}{\mathrm{d} t} - \frac{\mathrm{d}\log \alpha_t}{\mathrm{d} t}\sigma_t^2$. \cite{reverse_sde} establishes a pivotal theorem that the forward
SDE has an equivalent reverse-time diffusion process as in \cref{eq:reverse_sde}, so the generating process is equivalent to solving the diffusion SDE. Typically, diffusion models employ neural networks and distinct prediction parametrization to estimate the score function $\nabla \log_x p_{{\bs x}_t}({\bs x}_t)$ along the sampling trajectory~\cite{vp, edm, ddpm}.
\begin{align}
     {d}{\bs x}_t &= f(t){\bs x}_t \mathrm{d}t + g(t) \mathrm{d}{\bs w}  \label{eq:forward_sde} \\
     {d}{\bs x}_t &= [f(t){\bs x}_t - g(t)^2\nabla_{\bs x} \log p({\bs x}_t)] dt + g(t) {d}{\bs w}  \label{eq:reverse_sde}
\end{align}

\cite{vp} also shows that there exists a corresponding deterministic process \cref{eq:reverse_ode} whose trajectories share the same marginal probability densities of \cref{eq:reverse_sde}.
\begin{equation}
     {d}{\bs x}_t = [f(t){\bs x}_t - \frac{1}{2}g(t)^2\nabla_{\bs x_t} \log p({\bs x}_t)] {d}t \label{eq:reverse_ode}
\end{equation}
\textbf{Rectified Flow Model} simplifies diffusion model under the framework of \cref{eq:forward_sde} and \cref{eq:reverse_sde}. Different from \cite{ddpm} introduces non-linear transition scheduling, the rectified-flow model adopts linear function to transform data to standard Gaussian noise.
\begin{equation}
    {\bs x}_t = t{\bs x}_\text{real} + (1-t) {\bs \epsilon}\label{eq:flow_matching_forward} \\
\end{equation}
Instead of estimating the score function $\nabla_{\bs x_t} \log p_t({\bs x}_t)$, rectified-flow models directly learn a neural network $v_\theta(x_t, t)$ to predict the velocity field ${\bs v}_t = {d} {\bs x}_t = ({\bs x}_\text{real} - {\bs \epsilon})$.
\begin{equation}
    \mathcal{L}(\theta) = \mathbb{E} [\int_0^1 ||{\bs v}_\theta({\bs x}_t, t) - {\bs v}_t||^2 \mathrm{d}t]
\end{equation}

\textbf{Solver-based Fast Sampling Method} does not necessitate parameter adjustments and preserves the optimal performance of the pre-trained model. It can be seamlessly applied to an arbitrary diffusion model trained with a similar noise scheduler, offering a high degree of flexibility and adaptability. Solvers heavily rely on the reverse diffusion ODE in ~\cref{eq:reverse_ode}. Current solvers are mainly focused on DDPM/VP noise schedules. \cite{dpmsolver, deis} discovered the semi-linear structure in DDPM/VP reverse ODEs. Furthermore, \cite{unipc} enhanced the sampling quality by borrowing the predictor-corrector structure. Thanks to the multi-step ODE solvers, high-quality samples can be generated within as few as 10 steps. To further improve efficiency, \cite{correct} tracks the backward error and determines the adaptive step. Moreover, \cite{edm, dpmsolver} proposed a handcrafted timestep scheduler to sample respaced timesteps. However, \cite{top, ays, gits} argued that the handcrafted timesteps are suboptimal, and thus proposed an online optimization algorithm to find the optimal sampling timestep for generation. Apart from timestep optimization, \cite{bespoke} learned a specific path transition to improve the sampling efficiency.
\section{Problem Definition}
\begin{figure*}
    \centering
    \includegraphics[width=0.99\linewidth]{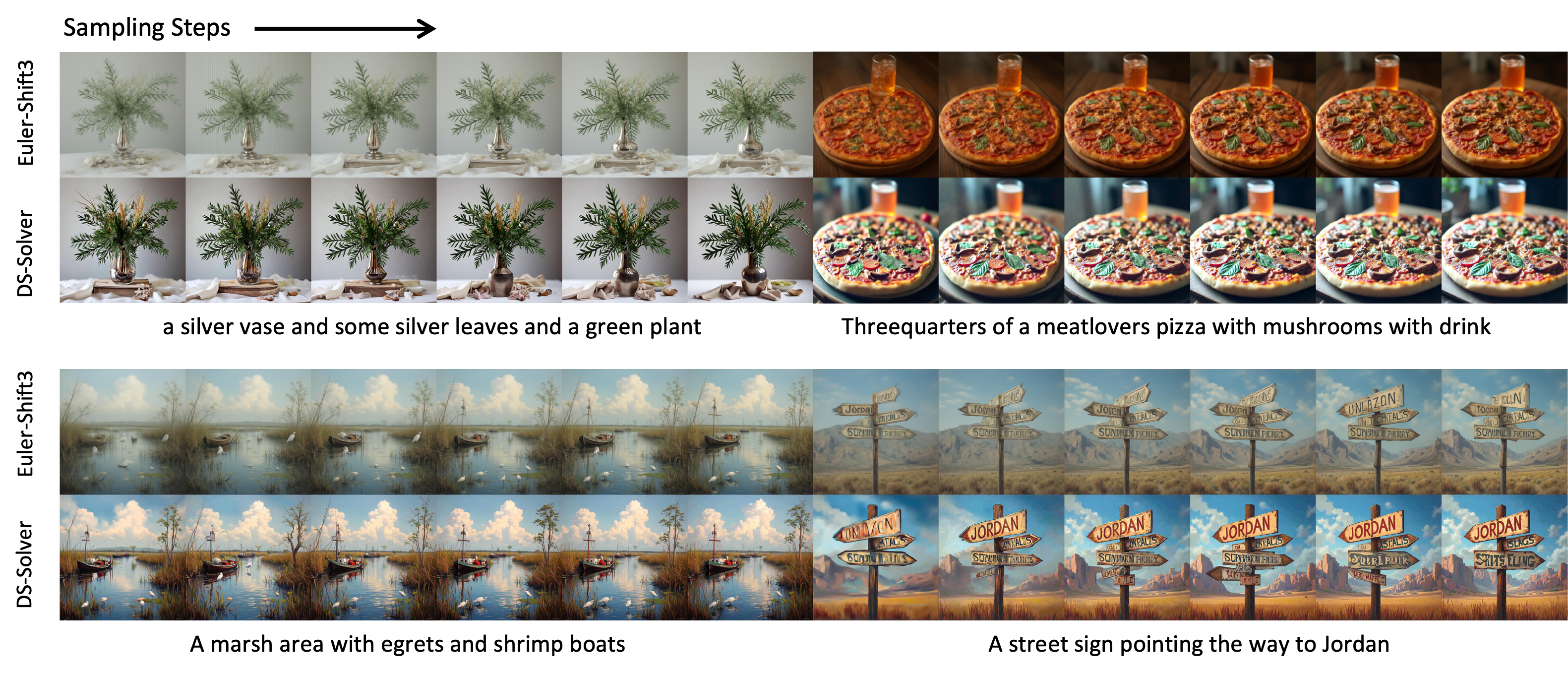}
    \caption{\textbf{Generated images from Flux.1-dev with Guidance=2.0 and our solver~(searched on SiT-XL/2).} {\small Euler-Shift3 is the default solver provided by diffusers and Flux community. Our solver(DS-Solver) achieves better visual quality from 5 to 10 steps(NFE).}}
    \label{fig:vis_flux_cfg2}
    \vspace{-1em}
\end{figure*}
As rectified-flow constitutes a simple yet elegant formulation within the diffusion family, we choose rectified-flow as the primary subject of discussion in this paper to enhance readability. Importantly, our proposed algorithm is not constrained to rectified-flow models. We explore its applicability to other diffusion models such as DDPM/VP in \cref{sec:ddpm_vp_extend}. 

Recall the continuous integration of reverse-diffusion in \cref{eq:fm_split_int} with the pre-defined interval $\{ t_0, t_1, ... t_{N}\}$. Given the pre-trained diffusion models and their corresponding ODE defined in \cref{eq:reverse_ode}, before we tackle the integration of interval $[t_i, t_{i+1}]$, we have already obtained the sampled velocity field of previous timestep $\{ ({\bs x}_j, t_j, {\bs v}_j ={\bs v}_\theta({\bs x}_j, t_j)  \}_{j=0}^i$. Here, we directly denote ${\bs x}_{t_i}$ as ${\bs x}_i$ for presentation clarity:
\begin{equation}
     {\bs x}_{i+1} = {\bs x}_i + \int_{t_i}^{t_{i+1}} {\bs v}_\theta({\bs x}_t, t) {dt} \label{eq:fm_split_int} \\
\end{equation}

As shown in \cref{eq:target}, we strive to develop \textbf{a more optimal solver} that minimizes the integral error while enhancing image quality under limited sampling steps (NFE) without requiring any parameter adjustments for the pre-trained model. 
\begin{align}
    \Phi & = \arg \min \mathbb{E}[|| \Phi({\bs \epsilon}, {\bs v}_\theta) - ({\bs \epsilon} + \int_0^1 {\bs v}_\theta({\bs x}_t, t) {d}t) ||]. \label{eq:target}
\end{align}

\section{Analysis of reverse-diffusion ODE Sampling}
Initially, we revisit the multi-step methods commonly used by \cite{unipc, deis, dpmsolver++} and identify potential limitations. Specifically, we argue that the Lagrange interpolation function used in Adams-Bashforth methods is suboptimal for diffusion models. Moreover, we show that the specific form of the interpolation function is inconsequential, as pre-integration and expectation estimation ultimately reduce it to a set of coefficients. Inspired by \cite{top}, we prove that timesteps and these coefficients effectively constitute our search space.

\subsection{Recap the multi-step methods}
As shown in \cref{eq:fm_split_eular}, the Euler method employs ${\bs v_i}$ as an estimate of \cref{eq:fm_split_eular} throughout the interval $[t_i, t_{i+1}]$. Higher-order multistep solvers further improve the estimation quality of the integral by incorporating interpolation functions and leveraging previously sampled values.
\begin{equation}
     {\bs x}_{i+1} = {\bs x}_i + (t_{i+1} -t_{i}) {\bs v}_\theta({\bs x}_i, t_i). \label{eq:fm_split_eular} \\
\end{equation}
The most classic multi-step solver Adams–Bashforth method~\cite{adam-bashforth}(deemed as Adams for brevity) incorporates the Lagrange polynomial to improve the estimation accuracy within a given interval. 
\begin{align}
    {\bs x}_{i+1} &\approx {\bs x}_i + \int_{t_i}^{t_{i+1}} \sum_{j=0}^i (\prod_{k=0,k\neq j}^i{\frac{t-t_k}{t_j - t_k}}){\bs v}_j dt  \\
     {\bs x}_{i+1} &\approx {\bs x}_i + \sum_{j=0}^i {\bs v}_j \int_{t_i}^{t_{i+1}} (\prod_{k=0,k\neq j}^i{\frac{t-t_k}{t_j - t_k}}) dt \label{eq:fm_lagrange} 
\end{align}
As \cref{eq:fm_lagrange} states, $\int_{t_i}^{t_{i+1}} (\prod_{k=0,k\neq j}^i{\frac{t-t_k}{t_j - t_k}}) dt$ of the Lagrange polynomial can be pre-integrated into a constant coefficient, resulting in only naive summation being required for ODE solving. Current SoTA multi-step solvers~\cite{dpmsolver++, unipc} are heavily inspired by Adams–Bashforth-like multi-step solvers. These solvers employ the Lagrange interpolation function or difference formula to estimate the value in the given interval. 

However, the Lagrange interpolation function and other similar methods only take $t$ into account while the ${\bs v}({\bs x}, t)$ also needs ${\bs x}$ as inputs. Using first-order Taylor expansion of ${\bs x}$ at ${\bs x}_i$ and higher-order expansion of $t$ at $t_i$, we can readily derive the error bound of the estimation. 
\subsection{Focus on Solver coefficients instead of the interpolation function}

In contrast to typical problems of solving ordinary differential equations, when considering reverse-diffusion ODEs along with prerained models, a compact searching space is present. We define a universal interpolation function $\mathcal{P}$, which has no explicit form. $\mathcal{P}$ measures the distance of $({\bs x}_t, t)$ between previous sampled points $\{({\bs x}_j, t_j)\}_{j=0}^i$ to determine the interpolation weight for $\{{\bs v}_j \}_{j=0}^i$. 
\begin{align}
    {\bs x}_{i+1} &\approx {\bs x}_i + \int_{t_i}^{t_{i+1}} \sum_{j=0}^i \mathcal{P}({\bs x}_t, t, {\bs x}_j, t_j){\bs v}_j dt . \\ \label{eq:fm_interpolate} 
    &\approx {\bs x}_i +  \sum_{j=0}^i {\bs v}_j \int_{t_i}^{t_{i+1}} \mathcal{P}({\bs x}_t, t, {\bs x}_j, t_j) dt .
\end{align}

\begin{assumption}
\label{ass:interpolation_bound}
We assume that the remainder term of the universal interpolation function $\sum_{j=0}^i \mathcal{P}({\bs x}_t, t, {\bs x}_j, t_j){\bs v}_j$ for $v({\bs x}, t)$ is bound as $\mathcal{O}(d{\bs x}^m) + \mathcal{O}({dt}^n)$, where $\mathcal{O}(d{\bs x}^m)$ is the $m$th-order infinitesimal for $d{\bs x}$, $\mathcal{O}({dt}^m)$ is the $n$th-order infinitesimal for $dt$.
\end{assumption}

\cref{eq:fm_interpolate} has a recurrent dependency, as ${\bs x}_t$ also relies on $ \sum_{j=0}^i \mathcal{P}({\bs x}_t, t, {\bs x}_j, t_j) {\bs v}_j dt$. To eliminate the recurrent dependency, shown in \cref{eq:fm_interpolate_taylorx}, we simply use the first order Taylor expansion of $x(t)$ at $x_i$ to replace the original form. 
Recall that ${\bs v}_i$ is already determined by ${\bs x}_i$ and $t_i$, thus the partial integral of \cref{eq:fm_interpolate_taylorx} can be formulated as \cref{eq:fm_interpolate_cofun}. Unlike naive Lagrange interpolation, $\mathcal{C}_j({\bs x}_i)$ is a function of the current ${\bs x}_i$ instead of a constant scalar.  Learning a $\mathcal{C}_j({\bs x}_i)$ function will cause the generalization to be lost. This limits the actual usage in diffusion model sampling. 
\begin{align}
    {\bs x}_{i+1} &\approx  {\bs x}_i + \sum_{j=0}^i {\bs v}_j \int_{t_i}^{t_{i+1}}  \mathcal{P}({\bs x}_i + {\bs v}_i(t-t_i), t, {\bs x}_j, t_j) dt \label{eq:fm_interpolate_taylorx} \\
    {\bs x}_{i+1} &\approx {\bs x}_i + \sum_{j=0}^i {\bs v}_j \mathcal{C}_j({\bs x}_i)(t_{i+1} - t_i)  \label{eq:fm_interpolate_cofun}
\end{align}
{
\begin{theorem}
Given sampling time interval $[t_i, t_{i+1}]$ and suppose $\mathcal{C}_j({\bs x_i}) = g_j({\bs x_i}) + b_i^j$, Adams-like linear multi-step methods have an error expectation of $(t_{i+1} - t_i)\mathbb{E}_{{\bs x}_i}||\sum_{j=0}^i {\bs v}_j g_j({\bs x}_i) || $. replacing $\mathcal{C}_j({\bs x})$ with $\mathbb{E}_{{\bs x}_i}[\mathcal{C}_j({\bs x}_i)]$ is the optimal choice and owns an error expectation of $(t_{i+1} - t_i)\mathbb{E}_{{\bs x}_i}||\sum_{j=0}^i {\bs v}_j [g_j({\bs x}_i) -  \mathbb{E}_{{\bs x}_i}g_j({\bs x}_i) || $. We place the proof in \cref{app:proof_pre_error}.
\label{theorom:expcoefficent_error_bound}
\end{theorem}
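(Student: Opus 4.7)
My plan is to unfold the decomposition $\mathcal{C}_j(\bs{x}_i)=g_j(\bs{x}_i)+b_i^j$ inside \cref{eq:fm_interpolate_cofun} and compare the exact one-step increment with (i) the Adams-like rule that uses the data-independent constant $b_i^j$ and (ii) the rule that uses the constant $\mathbb{E}_{\bs{x}_i}[\mathcal{C}_j(\bs{x}_i)]$. The exact one-step increment is $(t_{i+1}-t_i)\sum_{j=0}^i \bs{v}_j(g_j(\bs{x}_i)+b_i^j)$, while the Adams-like rule substitutes $b_i^j$ for $\mathcal{C}_j(\bs{x}_i)$ and therefore differs from it by $(t_{i+1}-t_i)\sum_j \bs{v}_j g_j(\bs{x}_i)$; taking the expected norm over $\bs{x}_i$ then recovers the first bound stated in the theorem.

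For the optimality claim, I would let $c_j$ be any scalar that is independent of $\bs{x}_i$ and write the expected error incurred by replacing $\mathcal{C}_j(\bs{x}_i)$ with $c_j$ as $E(\bs{c})=(t_{i+1}-t_i)\mathbb{E}_{\bs{x}_i}\bigl\|\sum_j \bs{v}_j(\mathcal{C}_j(\bs{x}_i)-c_j)\bigr\|$. Writing $c_j=\mathbb{E}_{\bs{x}_i}[g_j(\bs{x}_i)]+b_i^j+\delta_j$ peels the $\bs{x}_i$-dependence into the centered quantity $g_j(\bs{x}_i)-\mathbb{E}_{\bs{x}_i}g_j(\bs{x}_i)$. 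Treating the $\bs{v}_j$ as fixed (formally: conditioning on the already-generated history $\bs{x}_0,\ldots,\bs{x}_{i-1}$ and using a locally-constant surrogate for $\bs{v}_i$) and applying the bias--variance identity under the $L^2$ reading of the norm yields $\mathbb{E}_{\bs{x}_i}\bigl\|\sum_j \bs{v}_j(\mathcal{C}_j(\bs{x}_i)-c_j)\bigr\|^2=\bigl\|\sum_j \bs{v}_j\delta_j\bigr\|^2+\mathbb{E}_{\bs{x}_i}\bigl\|\sum_j \bs{v}_j(g_j(\bs{x}_i)-\mathbb{E}_{\bs{x}_i}g_j(\bs{x}_i))\bigr\|^2$, so the unique minimiser is $\delta_j=0$, i.e.\ $c_j^{\star}=\mathbb{E}_{\bs{x}_i}[\mathcal{C}_j(\bs{x}_i)]$; substituting $c_j^{\star}$ back recovers the second bound exactly.

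The main obstacle is that $\mathbb{E}\|X-c\|$ (unsquared) is minimised by the spatial median rather than by $\mathbb{E}[X]$, so the cleanest route is to carry out the minimisation under $\|\cdot\|^2$ as above and then pass back to $\|\cdot\|$ via concavity of the square root together with Jensen's inequality. A subsidiary subtlety is that $\bs{v}_i=\bs{v}_\theta(\bs{x}_i,t_i)$ is itself $\bs{x}_i$-measurable; I would either absorb its fluctuation into the higher-order remainder $\mathcal{O}(d\bs{x}^m)$ promised by \cref{ass:interpolation_bound}, or treat $\bs{v}_i$ as locally constant to leading order in the step size, so that the bias--variance decomposition above is valid up to corrections of the same order as those already discarded when deriving \cref{eq:fm_interpolate_cofun}. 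Once these pieces are in place, the remainder is linear-algebraic bookkeeping that translates the per-step residual into the stated expected error.
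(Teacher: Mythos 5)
Your proposal follows essentially the same route as the paper's proof: the first two bounds come from substituting $b_i^j$ (Adams-like) versus $\mathbb{E}_{\bs x_i}[\mathcal{C}_j(\bs x_i)]$ into the pre-integrated update and taking the expected norm of the difference, and optimality is established by minimising the \emph{squared} $L^2$ error with the $\bs v_j$ treated as fixed (the paper assumes them orthonormal and solves $\partial E/\partial a_j=0$; your bias--variance identity with linearly independent $\bs v_j$ is the same calculation). Your explicit acknowledgement that the unsquared expected norm is minimised by the median rather than the mean, and that $\bs v_i$ is $\bs x_i$-measurable, flags real gaps that the paper's own proof silently elides, so the proposal is, if anything, more careful than the original.
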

}

According to \cref{theorom:expcoefficent_error_bound}, we opt to replace $\mathcal{C}_j({\bs x}_i)$ with its expectation $\mathbb{E}_{{\bs x}_i}[\mathcal{C}_j({\bs x}_i)]$, thus we obtain diffusion-scheduler-related coefficients while maintaining generalization ability. 

Finally, given the predefined time intervals, we obtain the optimization target \cref{eq:fm_cofun_target}, where $c_i^j = \mathbb{E}_{{\bs x}_i}[\mathcal{C}_j({\bs x}_i)]$. The expectation can be deemed as optimized through massive data and gradient descent.
\begin{equation}
     {\bs x}_{i+1} \approx {\bs x}_i + \sum_{j=0}^i {\bs v}_j c_i^j (t_{i+1} - t_i)
      \label{eq:fm_cofun_target} 
\end{equation}
\subsection{Optimal search space for a solver}
\begin{assumption}
{As shown in \cref{eq:error_bound}}, the pre-trained velocity model ${\bs v}_\theta$ is not perfect and the error between ${\bs v}_\theta$ and ideal velocity field $\hat {\bs v}$ is L1-bounded, { where $\eta$ is a constant scalar}.
\begin{equation}
    ||\hat{\bs v} - {\bs v}_\theta|| \leq \eta \ll ||\hat{\bs v}||  \label{eq:error_bound}
\end{equation}
\end{assumption}

Previous discussions assume we have a perfect velocity function. However, the ideal velocity is hard to obtain, we only have pre-trained velocity models.  Following \cref{eq:fm_cofun_target}, we can expand \cref{eq:fm_cofun_target} from $t_{i=0}$ to $t_{i=N}$ to obtain the error bound caused by non-ideal velocity estimation. 
{
\begin{theorem}
The error caused by the non-ideal velocity estimation model can be formulated in the following equation. We can employ triangle inequalities to obtain the error-bound(L1) of ${||{\bs x}_N - {\bs \hat{x}}_N||}$, the proof can be found in the \cref{app:proof_total_error}.
$$
{||{\bs x}_N - {\bs \hat{x}}_N||} \leq \eta \sum_{i=0}^{N-1}\sum_{j=0}^i |c_i^j (t_{i+1} - t_{i})|
$$
\label{theorom:non_ideal_error_bound}
\end{theorem}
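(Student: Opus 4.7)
The plan is to unroll the recursion in \cref{eq:fm_cofun_target} from the shared initial condition ${\bs x}_0 = \hat{\bs x}_0$ up to time $t_N$, once using the ideal velocity $\hat{\bs v}$ at each sampled point and once using the learned velocity ${\bs v}_\theta$. Telescoping the linear update rule ${\bs x}_{i+1} = {\bs x}_i + \sum_{j=0}^{i} {\bs v}_j c_i^j (t_{i+1}-t_i)$ across all $N$ steps collapses the trajectory into a single closed form
\[
{\bs x}_N = {\bs x}_0 + \sum_{i=0}^{N-1}\sum_{j=0}^{i} {\bs v}_\theta({\bs x}_j,t_j)\, c_i^j (t_{i+1}-t_i),
\]
and an analogous identity holds for $\hat{\bs x}_N$ with $\hat{\bs v}(\hat{\bs x}_j,t_j)$ in place of ${\bs v}_\theta({\bs x}_j,t_j)$. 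Subtracting the two identities cancels ${\bs x}_0$ and leaves a residual expressed entirely through pointwise gaps of the velocity field.

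Next I would apply the triangle inequality to move the L1 norm through the double sum and then through each summand, yielding
\[
\|{\bs x}_N - \hat{\bs x}_N\| \leq \sum_{i=0}^{N-1}\sum_{j=0}^{i} \|\hat{\bs v}(\hat{\bs x}_j,t_j) - {\bs v}_\theta({\bs x}_j,t_j)\|\, \bigl|c_i^j (t_{i+1}-t_i)\bigr|.
\]
Invoking \cref{eq:error_bound} at each $t_j$ replaces every inner norm by the constant $\eta$, and factoring $\eta$ out of the double sum produces exactly the claimed bound $\eta \sum_{i=0}^{N-1}\sum_{j=0}^{i} |c_i^j (t_{i+1}-t_i)|$. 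The absolute values on $c_i^j(t_{i+1}-t_i)$ are essential since neither the searched coefficients nor the step sizes need to be sign-definite after the differentiable search.

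The main obstacle is a subtle base-point mismatch: $\hat{\bs v}$ is evaluated along the ideal trajectory $\hat{\bs x}_j$ while ${\bs v}_\theta$ is evaluated along the numerical trajectory ${\bs x}_j$, so strictly speaking the inner norm $\|\hat{\bs v}(\hat{\bs x}_j,t_j) - {\bs v}_\theta({\bs x}_j,t_j)\|$ is not immediately covered by the pointwise bound $\|\hat{\bs v}({\bs x},t) - {\bs v}_\theta({\bs x},t)\| \leq \eta$. The cleanest way to close this gap is to read \cref{eq:error_bound} as a uniform-in-${\bs x}$ statement, so that the two evaluations can be routed through a common point without loss; alternatively, a Lipschitz assumption on $\hat{\bs v}$ together with a discrete Grönwall argument would produce only higher-order corrections that are absorbed into the stated first-order bound. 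I expect the authors to take the uniform-bound route, as it matches the global form in which the assumption is written.
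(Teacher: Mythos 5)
Your proposal follows essentially the same route as the paper's proof in \cref{app:proof_total_error}: unroll \cref{eq:fm_cofun_target} into the closed form ${\bs x}_N = {\bs \epsilon} + \sum_{i=0}^{N-1}\sum_{j=0}^i {\bs v}_j c_i^j (t_{i+1}-t_i)$, subtract the ideal-velocity analogue, push the norm through the double sum by the triangle inequality, and invoke \cref{eq:error_bound} to replace each velocity gap by $\eta$. The base-point mismatch you flag (the paper silently writes ${\bs v}_j - \hat{\bs v}_j$ without distinguishing evaluation points) is real, and the uniform-in-${\bs x}$ reading of the assumption you propose is indeed how the paper implicitly resolves it.
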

}
Based on \cref{theorom:non_ideal_error_bound}, since the error bound is related to timesteps and solver coefficients, we can define a much more compact search space consisting of $\{ c_i^j \}_{j < i, j=0, i=1}^N$ and $\{ t_i \}_{i=0}^{N}$. 

{
\begin{theorem}
Based on \cref{theorom:non_ideal_error_bound} and \cref{theorom:expcoefficent_error_bound}. We can derive the total upper error bound(L1) of our solver search method and other counterparts.

The total upper error bound of Our solver search is:

\begin{align*}
    \sum_{i=0}^{N-1} (t_{i+1} - t_{i}) (\sum_{j=0}^i \eta |\mathbb{E}_{{\bs x}_i}g_j({\bs x}_i) + b_i^j| \\
    + \mathbb{E}_{{\bs x}_i}||\sum_{j=0}^i {\bs v}_j g_j({\bs x}_i) - \mathbb{E}_{{\bs x}_i}g_j({\bs x}_i) ||) 
\end{align*}

Compared to Adams-like linear multi-step methods. Our searched solver has a small upper error bound. The proof can be found in the \cref{app:proof_total_error}. 
\label{theorom:total_error_bound}
\end{theorem}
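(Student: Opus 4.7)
The plan is to derive the bound by applying the triangle inequality to separate the total deviation $\|{\bs x}_N - \hat{\bs x}_N\|$ into two disjoint sources: (i) the interpolation/coefficient approximation error at each step, controlled by Theorem~\ref{theorom:expcoefficent_error_bound}, and (ii) the propagated error due to using the imperfect velocity network ${\bs v}_\theta$ in place of the ideal $\hat{\bs v}$, controlled by Theorem~\ref{theorom:non_ideal_error_bound}. Because both previous theorems already supply per-step bounds, the task here is essentially to combine them additively and then specialise the coefficients to the decomposition $\mathcal{C}_j({\bs x}_i) = g_j({\bs x}_i) + b_i^j$ that was introduced in Theorem~\ref{theorom:expcoefficent_error_bound}.

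First I would write the one-step identity obtained from \cref{eq:fm_interpolate_cofun} in the form
\begin{equation*}
{\bs x}_{i+1} - \hat{\bs x}_{i+1} = ({\bs x}_i - \hat{\bs x}_i) + (t_{i+1}-t_i)\sum_{j=0}^{i} \bigl( {\bs v}_j\, c_i^j - \hat{\bs v}_j\, \mathcal{C}_j({\bs x}_i)\bigr),
\end{equation*}
and then add and subtract $\hat{\bs v}_j c_i^j$ inside the summation to split the step-wise discrepancy into a ``velocity error'' part $\sum_j (\hat{\bs v}_j-{\bs v}_j)c_i^j$ and a ``coefficient error'' part $\sum_j \hat{\bs v}_j(c_i^j - \mathcal{C}_j({\bs x}_i))$. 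Applying the triangle inequality followed by the telescoping argument of Theorem~\ref{theorom:non_ideal_error_bound} to the first piece immediately yields the factor $\eta \sum_{j}|c_i^j|(t_{i+1}-t_i)$ with $c_i^j = \mathbb{E}_{{\bs x}_i} g_j({\bs x}_i) + b_i^j$; taking absolute values gives the first summand in the target bound.

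For the second piece, I would invoke Theorem~\ref{theorom:expcoefficent_error_bound} directly: with $c_i^j$ chosen as the expectation $\mathbb{E}_{{\bs x}_i}\mathcal{C}_j({\bs x}_i) = \mathbb{E}_{{\bs x}_i} g_j({\bs x}_i) + b_i^j$, the residual of the coefficient approximation over the step is $(t_{i+1}-t_i)\mathbb{E}_{{\bs x}_i}\|\sum_j {\bs v}_j(g_j({\bs x}_i) - \mathbb{E}_{{\bs x}_i} g_j({\bs x}_i))\|$, which produces the second summand. Summing the resulting per-step inequality over $i = 0,\ldots,N-1$ and using $\hat{\bs x}_0 = {\bs x}_0$ gives the stated aggregate bound.

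Finally, for the comparison with Adams-like methods, I would observe that both solvers share the velocity-error contribution (the $\eta$ term is determined by the chosen $c_i^j$, and the Adams coefficients are a particular choice in the same search space, so our optimised $c_i^j$ can only make that term smaller), while Theorem~\ref{theorom:expcoefficent_error_bound} already shows that the expectation choice strictly minimises the interpolation residual relative to Adams-like linear multistep coefficients. Adding the two comparisons yields the ``smaller upper bound'' conclusion. The main obstacle I anticipate is handling the ``shared'' part rigorously: since Adams uses Lagrange coefficients depending only on $\{t_k\}$, one must argue that these fit within the same parametric family $g_j({\bs x}_i)+b_i^j$ (taking $g_j \equiv 0$ and $b_i^j$ equal to the Lagrange integral), so that our optimisation is indeed over a superset and both the velocity-error and coefficient-error summands can be bounded term-by-term by the corresponding Adams expressions.
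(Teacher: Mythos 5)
Your decomposition is essentially the paper's own argument: split $\|{\bs x}_N-\hat{\bs x}_N\|$ by the triangle inequality into a velocity-mismatch part (telescoping to the $\eta\sum_{i,j}|c_i^j(t_{i+1}-t_i)|$ bound of \cref{theorom:non_ideal_error_bound}) and a coefficient-approximation part (the per-step expectation residual of \cref{theorom:expcoefficent_error_bound}), then substitute $c_i^j=\mathbb{E}_{{\bs x}_i}g_j({\bs x}_i)+b_i^j$ and sum over $i$; the paper performs the same split at the endpoint level, $\|{\bs x}_N-\hat{\bs x}\|\le\|{\bs x}_N-\hat{\bs x}_N\|+\|\hat{\bs x}_N-\hat{\bs x}\|$, and additionally carries (and then discards) the $\mathcal{O}(d{\bs x}^m)+\mathcal{O}(dt^n)$ interpolation remainder. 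The one place you diverge is the final comparison: your claim that the $\eta$-term ``can only get smaller'' because Adams is a particular point of the search space does not hold term-by-term, since the searched coefficients minimize the variance residual rather than the full bound, so $|\mathbb{E}_{{\bs x}_i}g_j({\bs x}_i)+b_i^j|$ need not be smaller than $|b_i^j|$; the paper instead dismisses this term as negligible via $\eta\ll\|\hat{\bs v}\|$ and concludes from the dominance of the expectation term, which \cref{theorom:expcoefficent_error_bound} shows is minimized by the expectation choice.
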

}

Through \cref{theorom:total_error_bound}, our searched solvers own a relatively small upper error bound. Thus we can theoretically guarantee optimal compared to Adams-like methods.

\section{Differentiable solver search.}

Through previous discussion and analysis, we identify $\{ c_i^j \}_{j < i, j=0, i=1}^N$ and $\{ t_i \}_{i=0}^{N}$ as the target search items. To this end, we propose a differentiable data-driven solver search approach to determine these searchable items.

\begin{figure*}
\begin{minipage}{.49\textwidth}
\begin{algorithm}[H]
\caption{Solver Parametrization}
\label{alg:repsolver}
\begin{algorithmic}
\small
    \STATE {\bfseries Requires: }  $\{r_i, \}$ and $\{c_i^j, \}$
    
    \STATE {\bfseries TimeDeltas:} $\Delta t_0, \Delta t_1, ..., \Delta t_{n-1}$.
    
    \STATE {\bfseries SolverCoefficients:} $\mathcal{M} \in R^{N\times N}$
     
    $\{\Delta t_i, \}$=\text{Softmax}($\{r_i\}$)
    
    $\mathcal{M}=\begin{bmatrix}
        1 & & &\\
        c_1^0 & 1-c_1^0 &  &  \\
        \vdots & \vdots & \vdots & \ddots \\
        c_{n-1}^0 & c_{n-1}^1 & \cdots  & 1-\sum_{k=0}^{n-1}c_{n-1}^k \\
    \end{bmatrix}$
\end{algorithmic}
\end{algorithm}
\end{minipage}\hfill
\begin{minipage}{.49\textwidth}
\begin{algorithm}[H]
\caption{Differentiable Solver Search}
\begin{algorithmic}
   \small
   \STATE {\bfseries Require:}  $\bs v_{\theta}$ model, $\{\Delta t_i, \}_{i=0}^{N-1}$, $\mathcal{M}$, A buffer $Q$.
   
   \STATE Compute $\{\tilde{\bs x}_l, \}_{l=0}^L = \textbf{Euler}({\bs \epsilon}, v_\theta)$ .
   \FOR{$i=0$ {\bfseries to} $N-1$}
        \STATE $Q \overset{\mathrm{buffer}}{\leftarrow} \bs v_{\theta}({\bs x}_{t_i}, t_i)$
        \STATE Compute $ {\bs v} = \sum_{j=0}^i \mathcal{M}_{ij}Q_j$.
        \STATE $t_{i+1} = t_{i} + \Delta t_{i}$ \label{eq:deltas2timestep}
        \STATE ${\bs x}_{t_{i+1}} = {\bs x}_{t_i} + {\bs v}\Delta t_{i}$
   \ENDFOR
   \STATE {\bfseries return:} $\tilde{\bs x}_{t_{n-1}}$, $\mathcal{L}(\{\tilde{\bs x}_l \}_{l=0}^L , \{{\bs x}_i \}_{i=0}^{N})$
\end{algorithmic}
\end{algorithm}
\end{minipage}
\end{figure*}

\textbf{Timestep Parametrization.} As shown in \cref{alg:repsolver}, we employ unbounded parameters $\{ r_i,\}_{i=0}^{N-1}$ as the optimization subject, as the integral interval is from 0 to 1, we convert $r_i$ into time-space deltas $\Delta t_i$ with softmax normalization function to force their summation to $1$. We can access timestep $t_{i+1}$ through $t_{i+1} = t_{i} + \Delta t_{i}$. We initialize $\{ r_i\}_{i=0}^{N-1}$ with $1.0$ to obtain a uniform timestep distribution. 

\textbf{Coefficients Parametrization.} Inspired by \cite{top}. Given \cref{eq:fm_cofun_target} and \cref{eq:fm_split_int}, when the velocity field $v_\theta(x, t)$ yields constant value, an implicit constraint $\sum_{k=0}^i c_k^i = 1$ emerges. This observation motivates us to re-parameterize the diagonal value of $M$ as $\{1 - \sum_{j=0}^{i-1} c_i^j, \}_{i=0}^{N-1}$. We initialize $\{c_i^k, \}$ with zeros to mimic the behavior of the Euler solver.

\textbf{Mono-alignment Supervision.} We take the $L$-step Euler solver's ODE trajectory $\{ \tilde{\bs x} \}_{l=0}^L$  as reference. We minimize the gap between the target and source trajectories with the MSE loss. We also adopt Huber loss as auxiliary supervision for ${\bs x}_{t_N}$.

\section{Extending to DDPM/VP framework}
\label{sec:ddpm_vp_extend}

Applying our differentiable solver search to DDPM is infeasible. However, \cite{vp} suggests that there exists a continuous SDE process with $\{f(t) = -{\frac{1}{2}}\beta_t; g(t) = \sqrt{\beta_t} \}$ corresponding to discrete DDPM. This motivates us to transform the search space from the infeasible discrete space to its continuous SDE counterpart.
\cite{dpmsolver} and \cite{deis} discover the semi-linear structure of diffusion and propose exponential integral with $\epsilon$ parametrization to tackle the fast sampling problem of DDPM models, where $\alpha_t = e^{\int_0^t -\frac{1}{2}\beta_s ds}$, $\sigma_t = \sqrt{1- e^{\int_0^t -\beta_s ds}}$ and $\lambda_t = \log \frac{\alpha_t}{\sigma_t}$. \cite{dpmsolver++} further discovers that $x$ parametrization is more powerful for diffusion sampling under limited steps, where $\bar{\bs x} = \frac{{\bs x}_t - \sigma {\bs \epsilon}}{\alpha_t}$.
\begin{equation}
     {\bs x}_t = \frac{\sigma_t} {\sigma_s}{\bs x}_s + \sigma_t \int_{\lambda_s}^{\lambda_t}e^{\lambda}\bar{\bs x}_\theta({\bs x}_{t(\lambda)}, t(\lambda))\mathrm{d}\lambda \label{eq:exp_int_x} 
\end{equation}
We opt to follow the ${\bar x}$ parametrization as DPM-Solver++. However, we find directly interpolating $e^{\lambda}{\bs x}_\theta(x_t, t)$ as a whole part is hard for searching, and yields worse results. To avoid conflating the interpolation coefficients with exponential integral, we employ $\omega_t = \frac{\alpha_t}{\sigma_t}$  and transform \cref{eq:exp_int_x} { into \cref{eq:exp_int_x_omega2}} with a similar interpolation format as \cref{eq:fm_interpolate_cofun}, where $t(\omega)$ maps $\omega$ to timestep.
\begin{align}
    {\bs x}_t &\approx \frac{\sigma_t} {\sigma_s}\bar{\bs x}_s + \sigma_t ({\omega_t} - {\omega_s}) \sum_{k=1}^i c_i^k {\bs x}_\theta(\bar{\bs x}_k, t_k) \label{eq:exp_int_x_omega2} 
\end{align}
\section{Experiment}
We demonstrate the efficiency of our differentiable solver search by conducting experiments on publicly available diffusion models. Specifically, we utilize DiT-XL/2~\cite{dit} trained with DDPM scheduling and rectified-flow models SiT-XL/2~\cite{sit} and FlowDCN-XL/2~\cite{flowdcn}. Our default training setting employs the Lion optimizer~\cite{lion} with a constant learning rate of 0.01 and no weight decay. We sample 50,000 images for the entire search process. Notably, searching with 50,000 samples using FlowDCN-B/2 requires approximately 30 minutes on 8 × H20 computation cards. During the search, we deliberately avoid using CFG to construct reference and source trajectories, thereby preventing misalignment.

\subsection{Rectified Flow Models}
We search solver with FlowDCN-B/2, FlowDCN-S/2 and SiT-XL/2. We compare the search solver's performance with the second-order and fourth-order Adam multistep method on SiT-XL/2, FlowDCN-XL/2 trained on ImageNet $256\times256$ and FlowDCN-XL/2 trained on ImageNet $512\times512$. 

\begin{figure}
    \centering
   \subfloat[FID of Search Model\label{fig:model_fid}]{
    \includegraphics[width=0.48\linewidth]{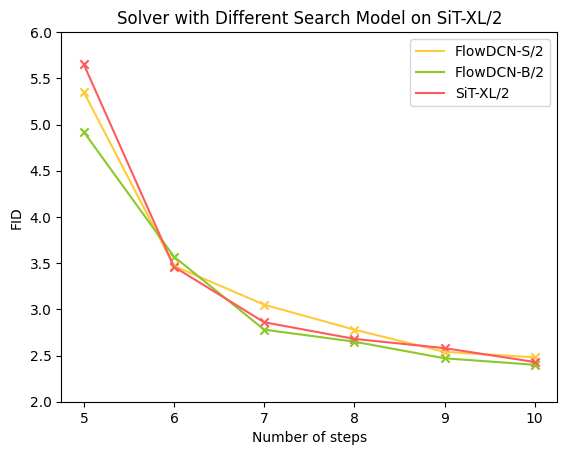}
    } 
   \subfloat[FID of RefTraj Steps\label{fig:steps_fid}]{
    \includegraphics[width=0.48\linewidth]{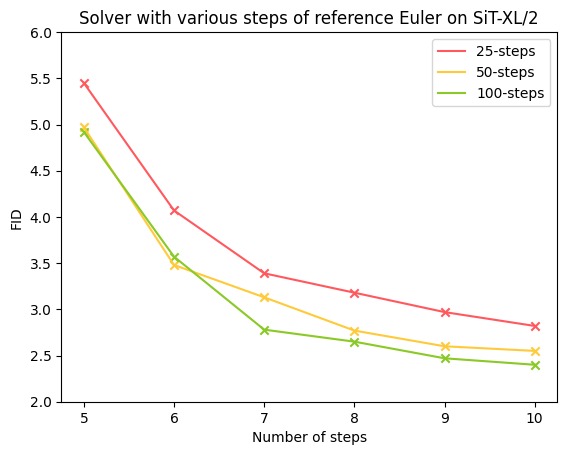}
    } 
    \caption{\textbf{Ablations studies of Differentiable Solver Search.} {\small We evaluate the searched solver on SiT-XL/2, and report the FID performance curve of searched solvers.}}
    \vspace{-2em}
\end{figure}
\begin{figure*}
    \centering
   \subfloat[SiT-XL/2-R256 \label{fig:sit256}]{
        \includegraphics[width=0.3\linewidth]{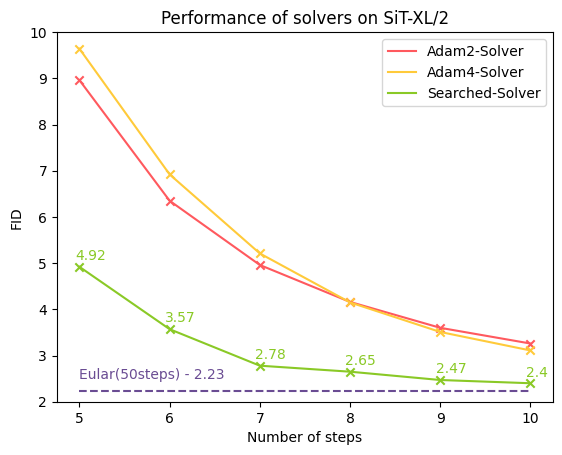}
    } 
   \subfloat[FlowDCN-XL/2-R256 \label{fig:flowdcn256}]{
        \includegraphics[width=0.3\linewidth]{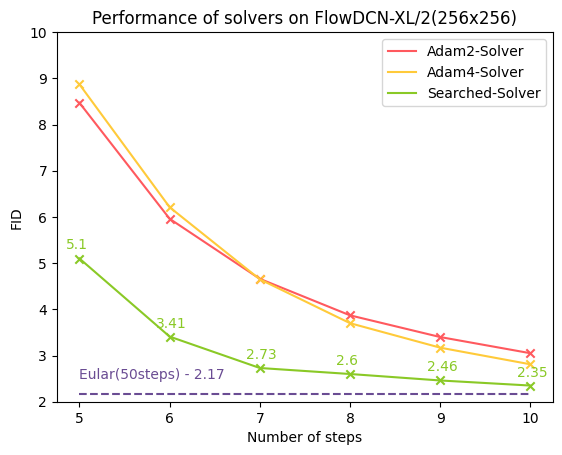}
    } 
    \subfloat[FlowDCN-XL/2-R512 \label{fig:flowdcn512}]{ 
         \includegraphics[width=0.3\linewidth]{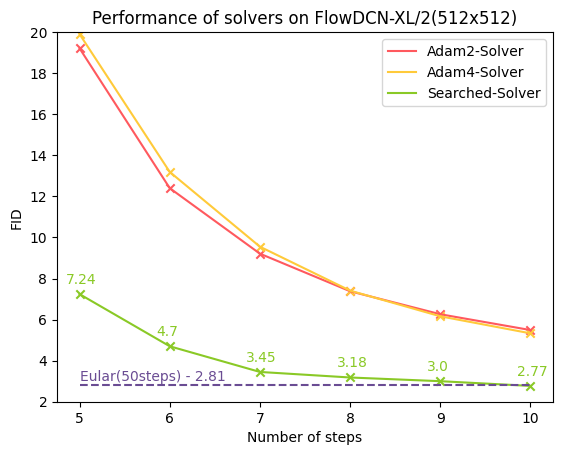}
    }
    
    \caption{\textbf{The same searched solver on different Rectified-Flow Models.} {\small R256 and R512 indicate the generation resolution of given model. We search solver with FlowDCN-B/2 on ImageNet-$256\times256$ and evaluate it with SiT-XL/2 and FlowDCN-XL/2 on different resolution datasets. Our searched solver outperforms traditional solvers by a significant margin. ~{More metrics(sFID, IS, Precision, Recall) are places at Appendix}}}
    \vspace{-1em}
\end{figure*}

\textbf{Search Model.} We tried different search models among different size and architecture. We report the FID performance of SiT-XL/2 in \cref{fig:model_fid}. Surprisingly, we find that the FID performance of SiT-XL/2 equipped with the solver searched using FlowDCN-B/2 outperforms the solver searched on SiT-XL/2 itself. The reconstruction error(in Appendix) between the sampled result produced by Euler-250 steps is as expected. These findings suggest that there exists a minor discrepancy between FID and the pursuit of minimal error in the current solver design.

\textbf{Step of Reference Trajectory.} We provide reference trajectory $\{ \tilde{\bs x} \}_{l=0}^L$ of different sampling step $L$ for differentiable solver search.  We take FlowDCN-B/2 as the search model and report the FID measured on SiT-XL/2 in \cref{fig:steps_fid}. As the sampling step of reference trajectory increases, the FID of SiT-XL/2 further improves and becomes better. However, the performance improvement is not significant when the number of steps is 5 or 6, which suggests that there is a limit to the improvement achievable with an extremely small number of steps.

\textbf{ImageNet $256\times256$.} We validate the searched solver on SiT-XL/2 and FlowDCN-XL/2. We arm the pre-trained model with CFG of 1.375. As shown in \cref{fig:sit256}, our searched solver improves FID performance significantly and achieves 2.40 FID under 10 steps. As shown in \cref{fig:flowdcn256}, our searched solver achieves 2.35 FID under 10 steps, beating traditional solvers by large margins. 

We also provide the comparison with recent solver-based distillation \cite{flowturbo} to demonstrate the efficiency of our searched solver in \cref{tab:distillation}. Our searched solver achieves better metric performance under similar NFE with much fewer parameters. 

\begin{table}[ht]
\centering
\begin{tabular}{l|l|l|l|l|l}
\toprule
  SiT-XL-R256 & NFE-CFG & Params  & FID & IS   \\ 
\midrule
 Heun & 16x2 & 0 & 3.68 & /  \\   
 Heun & 22x2 & 0 & 2.79 & / \\   
 Heun & 30x2 & 0 & 2.42 & /  \\  
 Adam2 & 15x2 & / & 2.49 & 236 \\
 Adam2 & 16x2 & 0 & 2.42 & 237  \\ 
 Adam4 & 15x2 & / & 2.33 & 242 \\  
 Adam4 & 16x2 & 0 & 2.27 & 243 \\
 \midrule
 FlowTurbo & (7+3)x2 & $2.9\times10^7$ & 3.93 &  224  \\  
 FlowTurbo & (8+2)x2 & $2.9\times10^7$ & 3.63 & /  \\   
 FlowTurbo & (12+2)x2 & $2.9\times10^7$ & 2.69 & / \\  
 FlowTurbo & (17+3)x2 & $2.9\times10^7$ & 2.22 & 248 \\  \midrule
 ours & 6x2 & 21 & 3.57 & 214  \\ 
 ours & 7x2 & 28 & 2.78 & 229  \\ 
 ours & 8x2 & 36 & 2.65 & 234  \\ 
 ours & 10x2 & 55 & 2.40 & 238 \\ 
 ours & 15x2 & 55 & 2.24 & 244 \\ 
 \bottomrule
\end{tabular}

\caption{\textbf{Comparsion with Distillation methods.} {\small Our searched solver achieves much better result under the same NFE with much fewer parameters.}}
\label{tab:distillation}
\end{table}

\textbf{ImageNet $512\times512$.} Since \cite{sit} has not released SiT-XL/2 trained on $512\times512$ resolution, we directly report the performance collected from FlowDCN-XL/2. We arm FlowDCN-XL/2 with CFG of 1.375 and four channels.  Our searched solver achieves 2.77 FID under 10 steps, beating traditional solver by a large margin, even slightly outperforming the Euler solver with 50 steps(2.81FID).

\textbf{Text to Image.} Shown in \cref{fig:vis_flux_cfg2}, we apply our solver search on FlowDCN-B/2 and SiT-XL/2 to the most advanced Rectified-Flow model Flux.1-dev and SD3~\cite{sd3}.  We find Flux.1-Dev would produce grid points in generation. To alleviate the grid pattern, we decouple the velocity field into mean and direction, only apply our solver to the direction, and replace the mean with an exponential decayed mean. The details can be found in the appendix.

We also provide result of distillation on SD1.5 and SDXL with solver search in \cref{app:t2i_distillation_with_solver}.

\subsection{DDPM/VP Models}
\begin{figure*}
    \centering
    \includegraphics[width=0.99\linewidth]{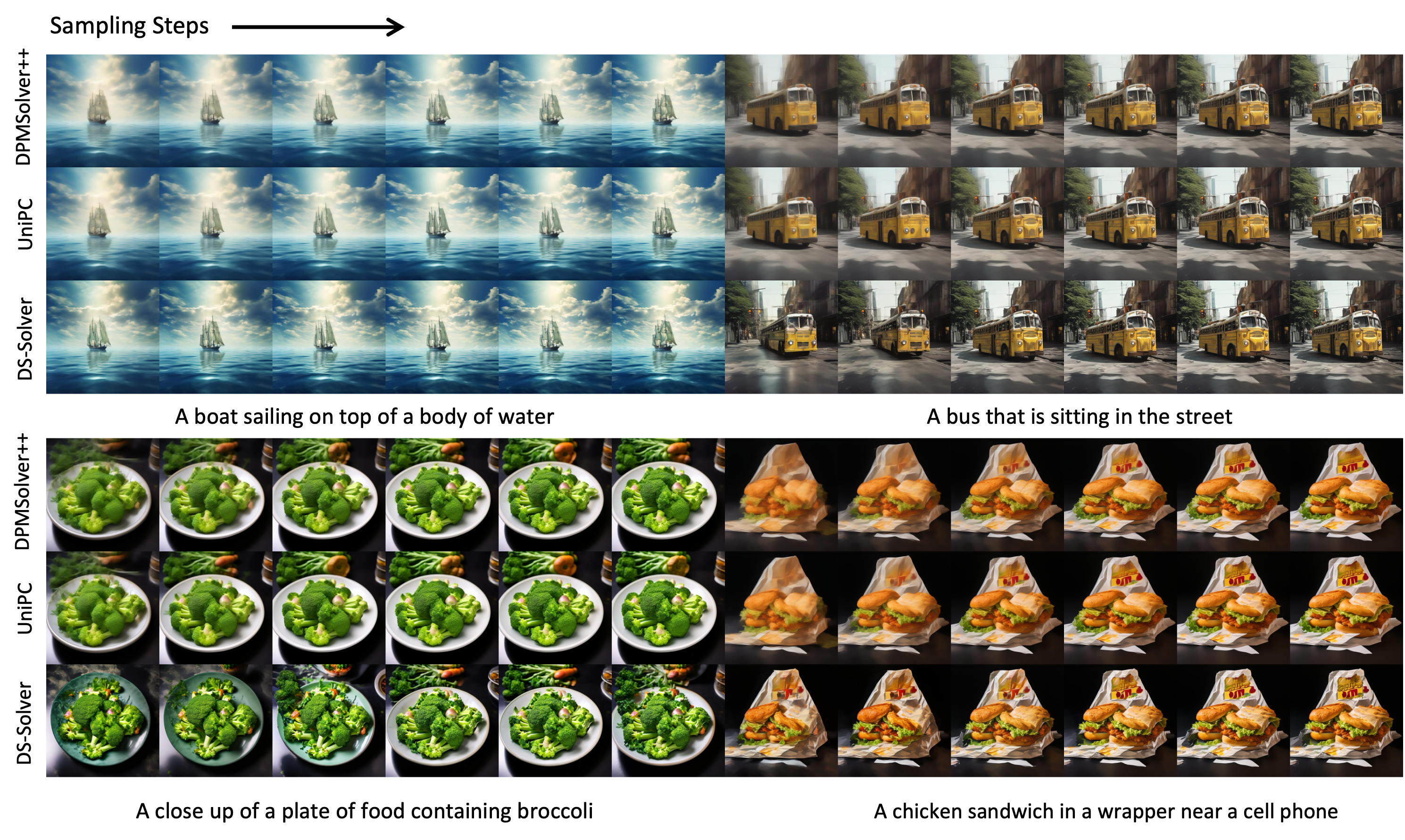}
    \caption{The images generated from PixArt-$\Sigma$ with CFG=2.0 equipped with Our DS-Solver ( searched on DiT-XL/2-R256 ).{\small In comparison to DPM-Solver++ and UniPC, our results consistently exhibit greater clarity and possess more details. Our solver achieves better quality from 5 to 10 steps(NFE).}}
    \label{fig:vis_pixart_512}
    \vspace{-1em}
\end{figure*}
\begin{table*}[hbt!]
\centering
\scalebox{0.85}{
\begin{tabular}{lcccccc}
\toprule
Methods \textbackslash NFEs & 5 & 6 & 7 & 8 & 9 & 10\\
\midrule
DPM-Solver++ with uniform-$\lambda$~\cite{dpmsolver++} & 38.04 & 20.96  & 14.69 & 11.09 &  8.32 & 6.47  \\
\midrule
DPM-Solver++ with uniform-$t$~\cite{dpmsolver++} & 31.32 & 14.36 & 7.62 & 4.93 &  3.77 & 3.23  \\
\midrule
DPM-Solver++ with uniform-$\lambda$-opt ~\cite{top} & 12.53 & 5.44 & 3.58 & 7.54 &  5.97 & 4.12  \\
\midrule
DPM-Solver++ with uniform-$t$-opt ~\cite{top} & 12.53 & 5.44 & 3.89 & 3.81 & 3.13 & 2.79 \\
\midrule
UniPC with uniform-$\lambda$~\cite{unipc} & 41.89 & 30.51 & 19.72 &12.94 &  8.49 & 6.13\\
\midrule
UniPC with uniform-$t$~\cite{unipc} & 23.48 & 10.31  & 5.73 & 4.06 &  3.39 & 3.04 \\
\midrule
UniPC with uniform-$\lambda$-opt~\cite{top} & 8.66 & 4.46 & 3.57 & 3.72 &  3.40 & 3.01 \\
\midrule
UniPC with uniform-$t$-opt ~\cite{top} & 8.66 & 4.46 & 3.74 & 3.29 & 3.01 & 2.74  \\
\midrule
\textbf{Searched-Solver} & \bf7.40 & \bf3.94 & \bf2.79 & \bf2.51 & \bf2.37 & \bf2.33 \\
\bottomrule
\end{tabular}
}
\caption{\textbf{FID ($\downarrow$) of different NFEs on DiT-XL/2-R256} . {\small \textit{-opt} indicates online optimization of the timesteps scheduler.}}
\vspace{-1em}
\label{tab: dpms and unipc_imagenet256}
\end{table*}
\begin{table*}[hbt!]
\centering
\scalebox{0.85}{
\begin{tabular}{lcccccc}
\toprule
Methods \textbackslash NFEs & 5 & 6 & 7 & 8 & 9 & 10\\
\midrule
UniPC with uniform-$\lambda$~\cite{unipc} & 41.14 & 19.81 & 13.01 & 9.83 &  8.31 & 7.01 \\
\midrule
UniPC with uniform-$t$~\cite{unipc} & 20.28 & 10.47  & 6.57 & 5.13 &  4.46 & 4.14\\
\midrule
UniPC with uniform-$\lambda$-opt~\cite{top} & 11.40 & \bf5.95 & 4.82 & 4.68 &  6.93 & 6.01 \\
\midrule
UniPC with uniform-$t$-opt~\cite{top} & 11.40 & \bf5.95 & 4.64 & 4.36 &  4.05 & 3.81\\
\midrule
\textbf{Searched-solver}{(searched on DiT-XL/2-R256)} & \bf10.28 & 6.02 & \bf4.31 & \bf3.74 & \bf3.54 & \bf3.64 \\
\bottomrule
\end{tabular}
}
\caption{\textbf{FID ($\downarrow$) of different NFEs on DiT-XL/2-R512.} {\small \textit{-opt} indicates online optimization of the timesteps scheduler.}}
\vspace{-1em}
\label{tab:unipc_imagenet512}
\end{table*}

We choose the open-source DiT-XL/2\cite{dit} model trained on ImageNet $256\times256$ as the search model to carry out the experiments. We compare the performance of the searched solver with DPM-Solver++ and UniPC on ImageNet $256\times256$ and ImageNet $512\times512$. 

\textbf{ImageNet $256\times256$.} Following \cite{dit} and \cite{top},  We arm pre-trained DiT-XL/2 with CFG of 1.5 and apply CFG only on the first three channels. As shown in \cref{tab: dpms and unipc_imagenet256}, our searched solver improves FID performance significantly and achieves 2.33 FID under 10 steps.

\textbf{ImageNet $512\times512$.} We directly apply the solver searched on $256\times256$ resolution to ImageNet $512\times512$. The result is also great to some extent, DiT-XL/2($512\times512$) achieves 3.64 FID under 10 steps, outperforming DPM-Solver++ and UniPC with a large gap. 

\textbf{Text to Image.} \label{sec:ddpm_t2i_exp} As we search solver with DiT and its corresponding noise scheduler, so it is infeasible to apply our solver to other DDPM models with different $\beta_\text{min}$ and $\beta_\text{max}$. Fortunately, we find \cite{pixart_sigma} and \cite{pixart} also employ the same $\beta_\text{min}$ and $\beta_\text{max}$ as DiT. So we can provide the visualization results of our searched solver on PixArt-$\Sigma$ and PixArt-$\alpha$. Our visualization result is produced with CFG of 2. 
\begin{table}
    \centering
    \setlength{\tabcolsep}{2pt}
    \label{tab:geneval_pixart}
    \begin{tabular}{lcccccc}
        \toprule
        \multirow{2}{*}{Solver} & \multirow{2}{*}{Steps} & \multirow{2}{*}{CFG} & \multicolumn{3}{c}{GenEval Metrics} & \multirow{2}{*}{Overall} \\
        &  &  & Color.Attr & Two.Obj & Pos &  \\
        \midrule
        \multirow{2}{*}{DPM++} & 5 & 2.0 & 6.50 & 33.08 & 4.75 & 0.40519 \\
        & 8 & 2.0 & 5.25 & 39.65  & 5.75 & 0.43074 \\
        \midrule
        \multirow{2}{*}{UniPC} & 5 & 2.0 & 6.50 & 34.85 & 5.25 & 0.41387 \\
        & 8 & 2.0 & 6.72 & 40.66 & 6.00 & 0.44134 \\
        \midrule
        \multirow{2}{*}{Ours} & 5 & 2.0 & 5.25 & 37.37  & 4.75 & 0.41933 \\
        & 8 & 2.0 & 7.25 & 42.68 & 7.50 & 0.45064 \\
        \bottomrule
    \end{tabular}
    \caption{\textbf{Results on GenEval Benchmark for PixArt at 512 Resolution.}{\small Our searched solver achieves better performance compared with UniPC/DPM++ on PixArt-$512\times512$.}}
\end{table}
We take PixArt-alpha as the text-to-image model. We follow the evaluation pipeline of ADM and take COCO17-Val as the reference batch. We generate 5k images using DPM-Solver++, UniPC and our solver searched on DiT-XL/2-R256. Also, we provided the performance results on GenEval Benchmark~\cite{geneval} in \cref{tab:geneval_pixart}.

\begin{table}
\centering
\begin{tabular}{l|l|l|l|l|l|l}
\toprule
      & Steps  & FID & sFID & IS & PR & Recall  \\ 
\midrule
DPM++ & 5 & 60.0 & 209 & 25.59 & 0.36 & 0.20 \\
DPM++ & 8 & 38.4 & 116.9 & 33.0 & 0.50 & 0.36 \\ 
DPM++ & 10 & 35.6 & 114.7 & 33.7 & 0.53 & 0.37 \\ 
\midrule
UniPC & 5 & 57.9 & 206.4 & 25.88 & 0.38 & 0.20 \\ 
UniPC & 8 & 37.6 & 115.3 & 33.3 & 0.51 & 0.36 \\ 
UniPC & 10 & 35.3 & 113.3 & 33.6 & 0.54 & 0.36\\ 
\midrule
Ours & 5 & 46.4 & 204 & 28.0 & 0.46 & 0.23\\ 
Ours & 8 & 33.6 & 115.2 & 32.6 & 0.54 & 0.39\\ 
Ours & 10 & 33.4 & 114.7 & 32.5 & 0.55 & 0.39 \\ 
\bottomrule
\end{tabular}
\caption{Metrics of different NFEs on PixArt-$\alpha$ (Our Solver are searched on ImageNet 256x256).} 
\vspace{-2em}
\end{table}

\subsection{Visualization Of Solver Parameters}

\textbf{Searched Coefficients} are visualized in \cref{fig:vis_solver}. The absolute value of searched coefficients corresponding to DDPM/VP shares a different pattern, coefficients in DDPM/VP are more concentrated on the diagonal while rectified-flow demonstrates a more flattened distribution. This indicates there exists a more curved sampling path in DDPM/VP compared to rectified-flow.

\textbf{Searched Timesteps} are visualized in \cref{fig:vis_solver}. Compared to DDPM/VP, rectified-flow models more focus on the more noisy region, exhibiting small time deltas at the beginning. We fit the searched timestep of different NFE with polynomials and provide the respacing curves in \cref{eq:respace_fm} and \cref{eq:respace_ddpm}. $t \in [0, 1]$, and $t=0$ indicates the most noisy timestep. 
\begin{align}
    & \text{ReFlow :}  -1.96t^4+3.51t^3-0.97t^2+0.43t   \label{eq:respace_fm} \\
    & \text{DDPM/VP :} -2.73 t^4 + 6.30 t^3 -4.744 t^2 + 2.17 t \label{eq:respace_ddpm}
    \vspace{-1em}
\end{align}

\section{Conclusion}
\vspace{-0.3em}
We have found a compact solver search space and proposed a novel differentiable solver search algorithm to identify the optimal solver. Our searched solver outperforms traditional solvers by a significant margin. Equipped with the searched solver, DDPM/VP and Rectified Flow models significantly improve under limited sampling steps. However, our proposed solver still has several limitations which we plan to address in future work.

\section{Limitations} In the main paper, we demonstrate text-to-image visualization with a small CFG value. However, it is intuitive that using a larger CFG would result in superior image quality. We attribute the inferior performance of large CFGs in our solver to the limitations of current naive solver structures and searching techniques. We hypothesize that incorporating predictor-corrector solver structures would enhance numerical stability and yield better images. Additionally, training with CFGs may also be beneficial.

\section*{Impact Statement}
This paper proposes a search-based solver for fast diffusion sampling. We acknowledge that it could lower the barrier for creating diffusion-based AIGC contents.

\paragraph{\bf Acknowledgements.} This work is supported by National Key R$\&$D Program of China (No. 2022ZD0160900), Jiangsu Frontier Technology Research and Development Program (No. BF2024076), and the Collaborative Innovation Center of Novel Software Technology and Industrialization, Alibaba Group through Alibaba Innovative Research Program. 

\bibliography{icml2025}
\bibliographystyle{icml2025}

\appendix
\onecolumn
\newpage
\section{More Metrics of Searched Solver} 
We adhere to the evaluation guidelines provided by ADM and DM-nonuniform, reporting only the FID as the standard metric in \cref{fig:sit256}. To clarify, we do not report selective results on rectified flow models; we present sFID, IS, PR, and Recall metrics for SiT-XL(R256), FlowDCN-XL/2(R256), and FlowDCN-B/2(R256). Our solver searched on FlowDCN-B/2, consistently outperforms the handcrafted solvers across FID, sFID, IS, and Recall metrics.
\begin{figure}
    \centering
    \subfloat[SiT-XL/2-R256] {
        \includegraphics[width=0.3\linewidth]{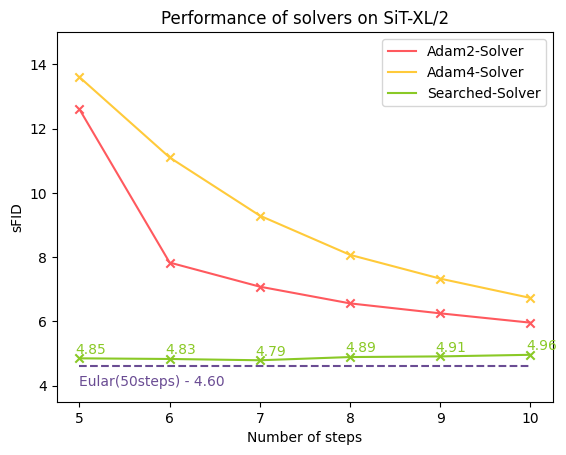}
    } 
   \subfloat[FlowDCN-XL/2-R256] {
        \includegraphics[width=0.3\linewidth]{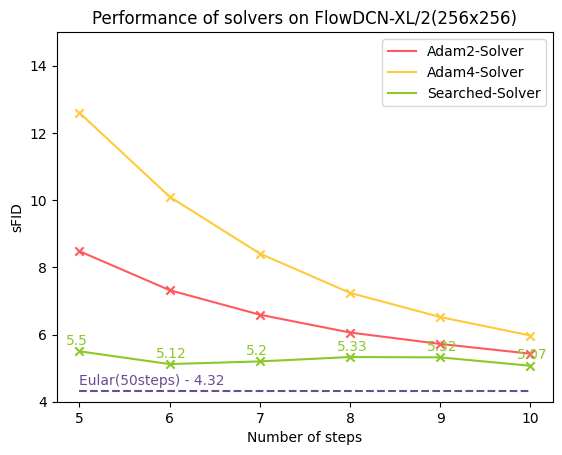}
    } 
    \subfloat[FlowDCN-XL/2-R512]{ 
         \includegraphics[width=0.3\linewidth]{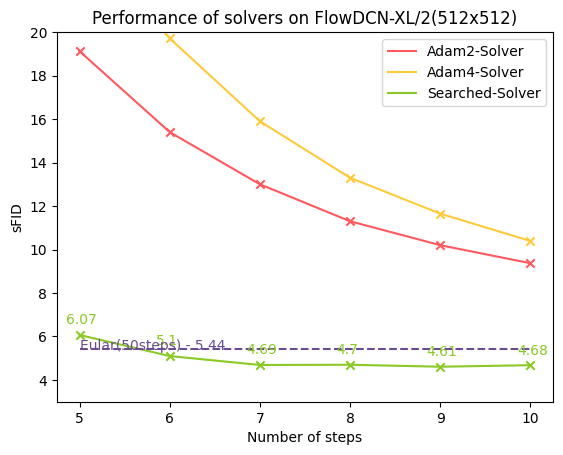}
    } 
    
    \subfloat[SiT-XL/2-R256] {
        \includegraphics[width=0.3\linewidth]{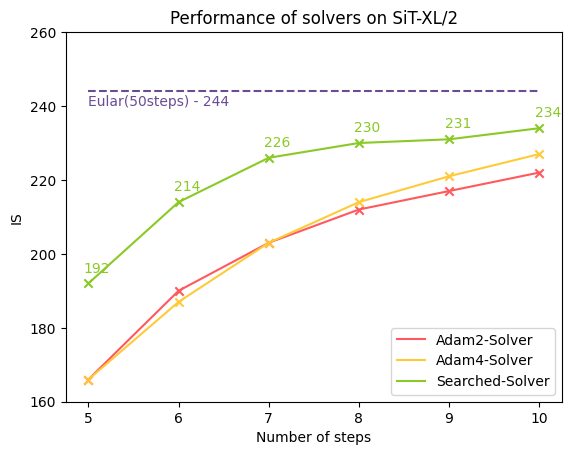}
    } 
   \subfloat[FlowDCN-XL/2-R256] {
        \includegraphics[width=0.3\linewidth]{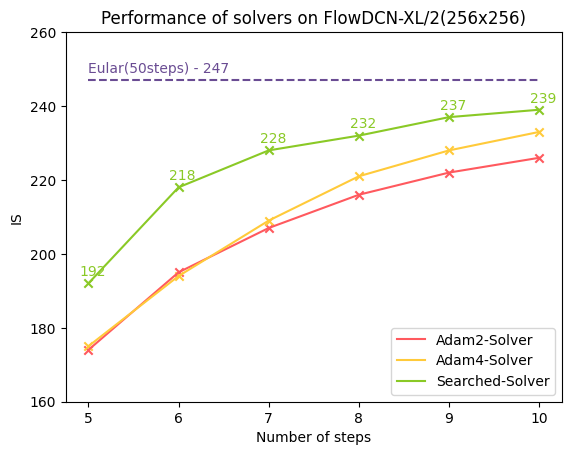}
    } 
    \subfloat[FlowDCN-XL/2-R512]{ 
         \includegraphics[width=0.3\linewidth]{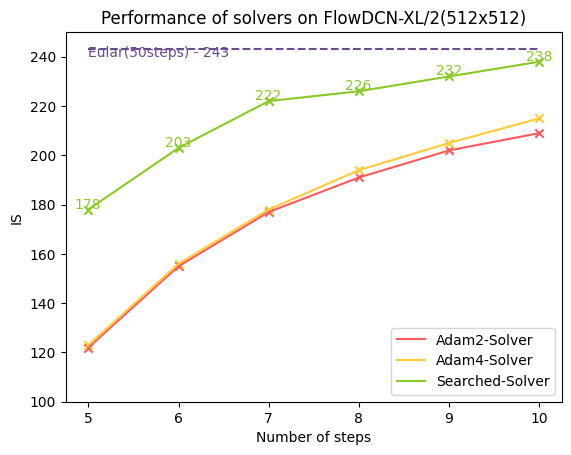}
    } 
    
    \subfloat[SiT-XL/2-R256] {
        \includegraphics[width=0.3\linewidth]{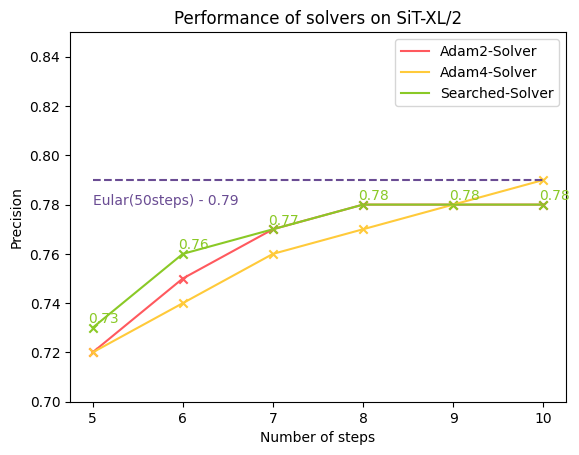}
    } 
   \subfloat[FlowDCN-XL/2-R256] {
        \includegraphics[width=0.3\linewidth]{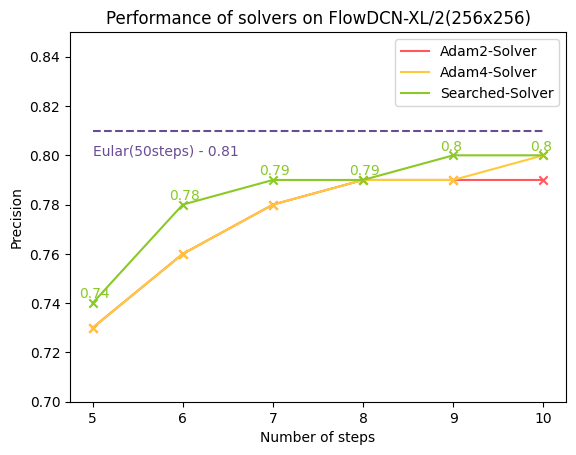}
    } 
    \subfloat[FlowDCN-XL/2-R512]{ 
         \includegraphics[width=0.3\linewidth]{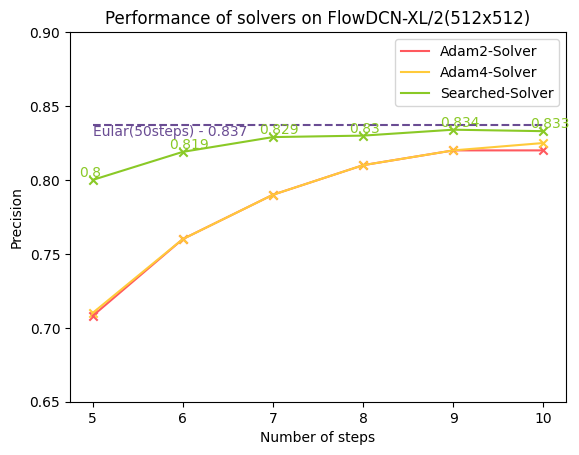}
    } 
    
    \subfloat[SiT-XL/2-R256] {
        \includegraphics[width=0.3\linewidth]{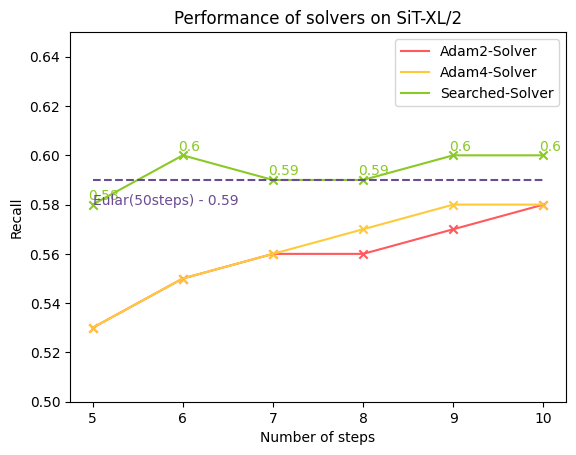}
    } 
   \subfloat[FlowDCN-XL/2-R256] {
        \includegraphics[width=0.3\linewidth]{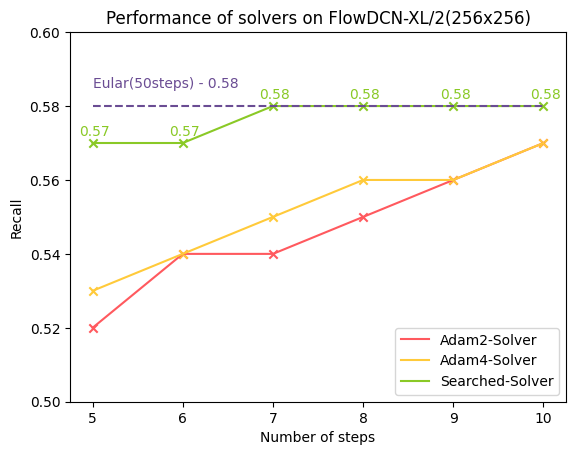}
    } 
    \subfloat[FlowDCN-XL/2-R512]{ 
         \includegraphics[width=0.3\linewidth]{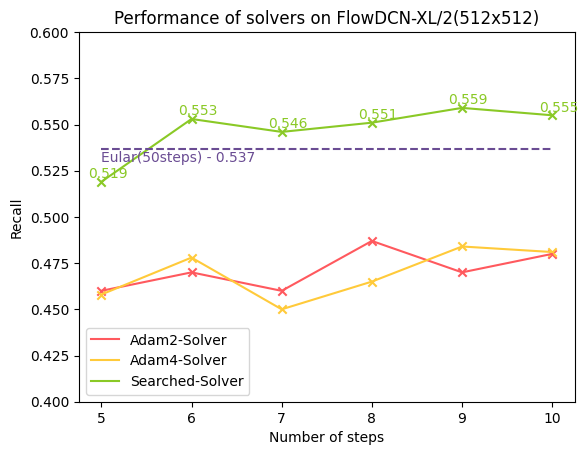}
    } 
   
    \caption{\textbf{The same searched solver on different Rectified-Flow Models.} \label{fig:all_sota} {\small R256 and R512 indicate the generation resolution of given model. We search solver with FlowDCN-B/2 on ImageNet-$256\times256$ and evaluate it with SiT-XL/2 and FlowDCN-XL/2 on different resolution datasets. Our searched solver outperforms traditional solvers by a significant margin.}}
    \vspace{-2em}
\end{figure}

\section{Computational complexity compared to other methods.}

\textbf{For sampling.}  
When performing sampling over $n$ time steps, our solver caches all pre-sampled predictions, resulting in a memory complexity of  $\mathcal{O}(n)$. The model function evaluation also has a complexity of $\mathcal{O}(n)$ ($\mathcal{O}(2 \times n)$ for CFG enabled). It is important to note that the memory required for caching predictions is negligible compared to that used by model weights and activations. Besides classic methods, we have also included a comparison with the latest Flowturbo published on NeurIPS24.

\begin{table}[h]
\centering
\begin{tabular}{|l|l|l|l|l|l|l|}
\hline
      & Steps  & NFE & NFE-CFG & Cache Pred & Order & search samples \\ \hline
Adam2 &  n & n & 2n & 2 & 2& / \\ \hline
Adam4 & n & n & 2n & 4 & 4  & / \\ \hline
heun & n & 2n & 4n & 2 & 2 & /\\ \hline
DPM-Solver++ &  n & n & 2n & 2 & 2& / \\ \hline
UniPC &  n & n & 2n & 3 & 3 & / \\ \hline
FlowTurbo &  n & $>$n & $>$2n & 2 & 2 & 540000(Real) \\ \hline
our & n & n & 2n & n & n& 50000(Generated) \\ \hline
\end{tabular}
\end{table}

\textbf{For searching.}  Solver-based algorithms, limited by their searchable parameter sizes, demonstrate significantly lower performance in few-step settings compared to distillation-based algorithms(5/6steps), making direct comparisons inappropriate. Consequently, we selected algorithms that are both acceleratable on ImageNet and comparable in performance, including popular methods such as DPM-Solver++, UniPC, and classic Adams-like linear multi-step methods. Since our experiments primarily utilize SiT, DiT, and FlowDCN that trained on the ImageNet dataset. We also provide fair comparisons by incorporating the latest acceleration method, FlowTurbo. Additionally, we have included results from the heun method as reported in FlowTurbo.

\section{Ablation on Search Samples}

We ablate the number of search samples on the 10-step and 8-step solver settings. \textit{Samples} means the total training samples the searched solver has seen.   \textit{Unique Samples} means the total distinct samples the searched solver has seen.  Our searched solver converges fast and gets saturated near 30000 samples.
\begin{table}[h]
\centering
\begin{tabular}{|l|l|l|l|l|l|l|}
\hline
  iters(10-step-solver)  & samples & unique samples  & FID & IS & PR & Recall  \\ \hline
  313 & 10000 & 10000 & 2.54 & 239 & 0.79 & 0.59\\\hline
  626 & 20000 & 10000 & 2.38 & 239 & 0.79 & 0.60\\\hline
  939 & 30000 & 10000 & 2.49 & 240 & 0.79 & 0.59\\\hline
  1252 & 40000 & 10000 & 2.29 & 239 & 0.80 & 0.60\\\hline
  1565 & 50000 & 10000 & 2.41 & 240 & 0.80 & 0.59\\\hline
  626 & 20000 & 20000 & 2.47 & 237 & 0.78 & 0.60\\\hline
  939 & 30000 & 30000 & 2.40 & 238 & 0.79 & 0.60\\\hline
  1252 & 40000 & 40000 & 2.48 & 237 & 0.80 & 0.59\\\hline
  1565 & 50000 & 50000 & 2.41 & 239 & 0.80 & 0.59\\\hline
\end{tabular}
\end{table}

\begin{table}[h]
\centering
\begin{tabular}{|l|l|l|l|l|l|l|}
\hline
  iters(8-step-solver)  & samples & unique samples  & FID & IS & PR & Recall  \\ \hline
  313 & 10000 & 10000 & 2.99 & 228 & 0.78 & 0.59\\\hline
  626 & 20000 & 10000 & 2.78 & 229 & 0.79 & 0.60\\\hline
  939 & 30000 & 10000 & 2.72 & 235 & 0.79 & 0.60\\\hline
  1252 & 40000 & 10000 & 2.67 & 228 & 0.79 & 0.60\\\hline
  1565 & 50000 & 10000 & 2.69 & 235 & 0.79 & 0.59\\\hline
  626 & 20000 & 20000 & 2.70 & 231 & 0.79 & 0.59\\\hline
  939 & 30000 & 30000 & 2.82 & 232 & 0.79 & 0.59\\\hline
  1252 & 40000 & 40000 & 2.79 & 231 & 0.79 & 0.60\\\hline
  1565 & 50000 & 50000 & 2.65 & 234 & 0.79 & 0.60\\\hline
\end{tabular}
\end{table}

\section{Solver Across different variance schedules}

Since our solvers are searched on a specific noise scheduler and its corresponding pre-trained models, applying the searched coefficients and timesteps to other noise schedulers yields meaningless results. We have tried applied searched solver on SiT(Rectified flow) and DiT(DDPM with $\beta_{min}=0.1, \beta_{max}=20$) to SD1.5(DDPM with $\beta_{min}=0.085, \beta_{max}=12$), but the results were inconclusive. Notably, despite sharing the DDPM name, DiT and SD1.5 employ distinct $\beta_{min}, \beta_{max}$ values, thereby featuring different noise schedulers. A more in-depth discussion of these experiments can be found in Section(Extend to DDPM/VP).

\section{Solver for different variance schedules}
As every DDPM has a corresponding continuous VP scheduler, so we can transform the discreet DDPM into continuous VP, thus we successfully searched better solver compared to DPM-Solvers. The details can be found in \cref{sec:ddpm_vp_extend}. To put it simply, under the empowerment of our high-order solver, the performance of DDPM and FM does not differ significantly (8, 9, 10 steps), which contradicts the common belief that FM is stronger at limited sampling steps.

\section{Text to image Distillation Experiments}
\label{app:t2i_distillation_with_solver}
We unify distillation and solver search to obtain high-quality multi-step generative models. We adopt adversarial training and trajectory supervision. We will open source the training code of unified training techniques.

\begin{table*}[h]
\centering
\small
\caption{\small Performance comparison on validation set of COCO-2017.}
\label{tab:coco-2017}
\resizebox{.85\textwidth}{!}{
\begin{tabular}{@{}llccccc@{}}
\toprule
\textbf{Method} & \textbf{Res.} & \textbf{Time ($\downarrow$)} & \textbf{\# Steps} & \textbf{\# Param.} & \textbf{FID ($\downarrow$)}\\ \midrule
SDv1-5+DPMSolver~(Upper-Bound)~\citep{dpmsolver} & 512& 0.88s & 25 & 0.9B & 20.1 \\\midrule
Rectified Flow & 512& 0.88s & 25 & 0.9B & 21.65\\ 
Rectified Diffusion & 512& 0.88s & 25 & 0.9B & 21.28\\ 
Rectified Flow& 512& 0.21s & 4 & 0.9B & 103.48 \\ 
PeRFlow& 512& 0.21s & 4 & 0.9B & 22.97 \\ 
Rectified Diffusion& 512& 0.21s & 4 & 0.9B & 20.64\\ 
Ours(Distillation+solver search)& 512 & 0.21s & 4 & 0.9B & 18.99 \\
\midrule 
PeRFlow-SDXL& 1024& 0.71s & 4 & 3B & 27.06 \\ 
Rectified Diffusion-SDXL& 1024& 0.71s & 4 & 3B & 25.81 \\
Ours(LORA+Distillation+solver search)& 1024 & 0.71s & 4 & 3B & 21.3 \\
\bottomrule
\end{tabular}}
\end{table*}

\begin{table*}[htp]
\centering
\small
\caption{\small Performance comparison on COCO-2014.}
\label{tab:comparison}
\resizebox{.85\textwidth}{!}{
\begin{tabular}{@{}llcccc@{}}
\toprule
\textbf{Method} & \textbf{Res.} & \textbf{Time ($\downarrow$)} & \textbf{\# Steps} & \textbf{\# Param.} & \textbf{FID ($\downarrow$)} \\ \midrule
\midrule
    \multicolumn{6}{c}{Stable Diffusion XL~(3B) and its accelerated or distilled versions}\\ 
    SDXL-Turbo & 512& 0.34s & 4 & 3B & 23.19\\ 
    SDXL-Lightning & 1024& 0.71s & 4 & 3B & 24.56\\ 
    DMDv2 & 1024& 0.71s & 4 & 3B & 19.32\\ 
    LCM & 1024& 0.71s & 4 & 3B & 22.16\\ 
    Phased Consistency Model& 1024& 0.71s & 4 & 3B & 21.04\\ 
    PeRFlow-XL& 1024& 0.71s & 4 & 3B & 20.99 \\ 
    Rectified Diffusion-XL~(Phased)& 1024& 0.71s & 4 & 3B & 19.71\\ 
    Ours(LORA+Distillation+solver search)& 1024& 0.71s & 4 & 3B & 11.4\\ 
\bottomrule
\end{tabular}}
\end{table*}

\section{Limitations.}
We place the limitation at the appendix, in order to provide more discussion space and obtain more insights from reviews. We copy the original limitation content and add more.

\textbf{Misalignd Reconstrucion loss and Performance.} Our proposed methods are specifically designed to minimize integral error within a limited number of steps. However, ablation studies reveal a mismatch between FID performance and Reconstruction error. To address this issue, we plan to enhance our searched solver by incorporating distribution matching supervision, thereby better aligning sampling quality.

\textbf{Larger CFG Inference.} In the main paper, we demonstrate text-to-image visualization with a small CFG value. However, it is intuitive that utilizing a larger CFG would result in superior image quality. We attribute the inferior performance of large CFGs on our solver to the limitations of current naive solver structures and searching techniques. We hypothesize that incorporating predictor-corrector solver structures would enhance numerical stability and yield better images. Additionally, training with CFGs may also be beneficial.

\textbf{Resource Consumption} We can hard code the searched coefficients and timesteps into the program files. However, Compared to hand-crafted solvers, our solver still needs a searching process.

\section{Proof of pre-integral error expectation}
\label{app:proof_pre_error}
\begin{theorem}
Given sampling time interval $[t_i, t_{i+1}]$ and suppose $\mathcal{C}_j({\bs x}) = g_j({\bs x}) + b_i^j$, Adams-like linear multi-step methods will introduce an upper error bound of $(t_{i+1} - t_i)\mathbb{E}_{{\bs x}_i}||\sum_{j=0}^i {\bs v}_j g_j({\bs x}_i) || $.

Our solver search(replacing $\mathcal{C}_j({\bs x})$ with $\mathbb{E}_{{\bs x}_i}[\mathcal{C}_j({\bs x}_i)]$) owns an upper error bound of $(t_{i+1} - t_i)\mathbb{E}_{{\bs x}_i}||\sum_{j=0}^i {\bs v}_j [g_j({\bs x}_i) -  \mathbb{E}_{{\bs x}_i}g_j({\bs x}_i) || $
\end{theorem}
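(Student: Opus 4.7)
The plan is to reduce the statement to a one–step local comparison using the exact coefficient form in Eq.~\ref{eq:fm_interpolate_cofun} together with the decomposition $\mathcal{C}_j({\bs x}_i)=g_j({\bs x}_i)+b_i^j$. The reference one-step update is
$\bar{\bs x}_{i+1}={\bs x}_i+(t_{i+1}-t_i)\sum_{j=0}^{i}{\bs v}_j\mathcal{C}_j({\bs x}_i)$,
against which both Adams-like methods and our searched coefficients will be compared. Everything else then follows by taking the $L^1$ norm of the residual and an expectation over ${\bs x}_i$.

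First I would observe that an Adams-like multi-step solver, by its Lagrange-interpolation construction, pre-integrates only the $t$-dependent part of the interpolation weights and therefore implicitly replaces $\mathcal{C}_j({\bs x}_i)$ by $b_i^j$, entirely discarding the ${\bs x}$-dependent correction $g_j({\bs x}_i)$. Subtracting this Adams approximation from $\bar{\bs x}_{i+1}$ leaves the single-step residual
$(t_{i+1}-t_i)\sum_{j=0}^{i}{\bs v}_j g_j({\bs x}_i)$,
and applying $\|\cdot\|$ followed by $\mathbb{E}_{{\bs x}_i}$ immediately produces the first bound.

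Next, our searched solver replaces $\mathcal{C}_j({\bs x}_i)$ by the constant $c_i^j=\mathbb{E}_{{\bs x}_i}[\mathcal{C}_j({\bs x}_i)]=\mathbb{E}_{{\bs x}_i}[g_j({\bs x}_i)]+b_i^j$. The same subtraction now gives a \emph{centered} residual
$(t_{i+1}-t_i)\sum_{j=0}^{i}{\bs v}_j\bigl(g_j({\bs x}_i)-\mathbb{E}_{{\bs x}_i}[g_j({\bs x}_i)]\bigr)$,
whose $L^1$ expectation is the second bound. The comparison between the two bounds is then the elementary variance-style inequality: since the ours-residual is the mean-zero shift of the Adams-residual and the Adams-residual has nonzero mean in general, the standard identity $\mathbb{E}\|Y-\mathbb{E}Y\|^2\le\mathbb{E}\|Y\|^2$ shows that centering cannot increase the second moment, from which a comparison of the $L^1$ means follows by Cauchy--Schwarz or Jensen.

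The main obstacle is the word \emph{optimal}. In strict $L^1$ the best constant replacement is the (coordinate-wise) median, not the mean, so I would present the result as "ours $\le$ Adams-like" rather than a strong minimax statement; alternatively, I would carry out the whole argument in the $L^2$ norm, where $c_i^j=\mathbb{E}_{{\bs x}_i}[\mathcal{C}_j({\bs x}_i)]$ is exactly the minimizer of $\mathbb{E}_{{\bs x}_i}\|\mathcal{C}_j({\bs x}_i)-c\|^2$, and then pass to $L^1$ via Jensen. The rest of the proof is mechanical: expand, subtract, take norms, and invoke the triangle inequality; the only structural input beyond the decomposition of $\mathcal{C}_j$ is that ${\bs v}_j$ is already determined by $({\bs x}_j,t_j)$ for $j\le i$, so it can be pulled outside the inner integral exactly as in the transition from Eq.~\ref{eq:fm_interpolate_taylorx} to Eq.~\ref{eq:fm_interpolate_cofun}.
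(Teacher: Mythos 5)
Your proposal is correct and follows essentially the same route as the paper's proof: decompose $\mathcal{C}_j({\bs x}_i)=g_j({\bs x}_i)+b_i^j$, observe that an Adams-like method keeps only the $t$-integrated constant $b_i^j$ while the searched solver keeps $\mathbb{E}_{{\bs x}_i}[\mathcal{C}_j({\bs x}_i)]$, subtract each from the reference update $(t_{i+1}-t_i)\sum_{j}{\bs v}_j\mathcal{C}_j({\bs x}_i)$, and take $\mathbb{E}_{{\bs x}_i}\|\cdot\|$ of the two residuals. The one substantive divergence is in how optimality of the mean is justified: the paper minimizes the squared $L^2$ objective $\mathbb{E}_{{\bs x}_i}\|\sum_{j}{\bs v}_j[g_j({\bs x}_i)-a_j]\|_2^2$ via a first-order condition under an added orthonormality assumption on the ${\bs v}_j$, whereas you invoke the coordinate-free variance identity $\mathbb{E}\|Y-\mathbb{E}Y\|_2^2\le\mathbb{E}\|Y\|_2^2$, which needs no such assumption for the comparison step. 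You are also more careful than the paper on a point it silently elides: the claimed inequality is stated in $L^1$, where the mean is not the minimizer (the median is), and $\mathbb{E}\|Y-\mathbb{E}Y\|\le\mathbb{E}\|Y\|$ can actually fail for sufficiently skewed distributions, so your suggestion to run the argument in $L^2$ and only then pass toward $L^1$ is the honest fix, even though Jensen then yields a bound by the $L^2$ rather than the $L^1$ magnitude of the uncentered residual. The paper's proof additionally folds in the $\mathcal{O}(d{\bs x}^m)+\mathcal{O}(dt^n)$ interpolation remainder from its Assumption 4.2 and argues it is negligible, but that goes beyond what the theorem statement itself asserts, so omitting it is immaterial.
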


\begin{proof}
Suppose $\mathcal{C}_j({\bs x_i}) = g_j({\bs x_i}) + b^j_i$. Adams-like linear multi-step methods would not consider $x$-related interpolation. thus pre-integral coefficients of Adams-like linear multi-step methods will only reduce into $b$. 

We obtain the error expectation of the pre-integral of Adams-like linear multi-step methods: 
\begin{align}
    &\mathbb{E}_{{\bs x}_i}||\sum_{j=0}^i {\bs v}_j [\mathcal{C}_j({\bs x}_i)] (t_{i+1} - t_i) -  \sum_{j=0}^i {\bs v}_j b_i^j (t_{i+1} - t_i) || \\
    =&\mathbb{E}_{{\bs x}_i}||\sum_{j=0}^i {\bs v}_j (t_{i+1} - t_i) [\mathcal{C}_j({\bs x}_i) - b_i^j  || \\
    =&(t_{i+1} - t_i) \mathbb{E}_{{\bs x}_i}||\sum_{j=0}^i {\bs v}_j  g_j({\bs x}_i)  ||
\end{align}
We obtain the error expectation of the pre-integral of our solver search methods:
\begin{align}
    &\mathbb{E}_{{\bs x}_i}||\sum_{j=0}^i {\bs v}_j [\mathcal{C}_j({\bs x}_i)] (t_{i+1} - t_i) -  \sum_{j=0}^i {\bs v}_j \mathbb{E}_{{\bs x}_i}[\mathcal{C}_j({\bs x}_i)] (t_{i+1} - t_i) || \\
    =&\mathbb{E}_{{\bs x}_i}||\sum_{j=0}^i {\bs v}_j (t_{i+1} - t_i) [\mathcal{C}_j({\bs x}_i) - \mathbb{E}_{{\bs x}_i}\mathcal{C}_j({\bs x}_i) || \\
    =& (t_{i+1} - t_i)\mathbb{E}_{{\bs x}_i}||\sum_{j=0}^i {\bs v}_j [g_j({\bs x}_i) -  \mathbb{E}_{{\bs x}_i}g_j({\bs x}_i) || 
\end{align}
Next, define the optimization problem:
$$E = \mathbb{E}_{{\bs x}_i}||\sum_{j=0}^i {\bs v}_j [g_j({\bs x}_i) - a_j]||_2^2.$$
We suppose different $v_j$ are orthogonal and $||v_j||_2^2 = 1$. As we leave $c_j^i$ as the expectation of $\mathcal{C}_j({\bs x}_i)$, we will demonstrate this choice is optimal.
\begin{equation}
 \frac{\partial E}{\partial a_j} = -2\mathbb{E}_{{\bs x}_i}(||v_j||_2^2 (g_j(x_i) - a_j))    
\end{equation}
Let $\frac{\partial E}{\partial a_j} = 0$, we obtain: $a_j = \frac{\mathbb{E}_{{\bs x}_i}g_i(x_i)||v_j||_2^2}{\mathbb{E}_{{\bs x}_i}||v_j||_2^2} = \mathbb{E}_{{\bs x}_i}g_j({\bs x}_i) = \mathbb{E}_{{\bs x}_i}\mathcal{C}_j({\bs x}_i) - b_i^j$.

So our searched solver has a lower and optimal error expectation:
\begin{equation}
     (t_{i+1} - t_i)\mathbb{E}_{{\bs x}_i}||\sum_{j=0}^i {\bs v}_j [g_j({\bs x}_i) -  \mathbb{E}_{{\bs x}_i}g_j({\bs x}_i)]|| \leq (t_{i+1} - t_i) \mathbb{E}_{{\bs x}_i}||\sum_{j=0}^i {\bs v}_j  g_j({\bs x}_i) || 
\end{equation}

Recall \cref{ass:interpolation_bound}, the integral upper error bound of universal interpolation $\mathcal{P}$ will be: 
\begin{align}
   &||\int_{t_i}^{t_{i+1}} v({\bs x_t}, t) dt  -  \sum_{j=0}^i {\bs v}_j \int_{t_i}^{t_{i+1}} \mathcal{P}({\bs x}_t, t, {\bs x}_j, t_j) dt ||. \\
   =&||\int_{t_i}^{t_{i+1}} v({\bs x_t}, t) dt - \int_{t_i}^{t_{i+1}} \sum_{j=0}^i \mathcal{P}({\bs x}_t, t, {\bs x}_j, t_j){\bs v}_j dt|| .  \\
   =&||\int_{t_i}^{t_{i+1}} [v({\bs x_t}, t) - \sum_{j=0}^i \mathcal{P}({\bs x}_t, t, {\bs x}_j, t_j){\bs v}_j] dt|| . \\
   <&\int_{t_i}^{t_{i+1}} ||v({\bs x_t}, t) - \sum_{j=0}^i \mathcal{P}({\bs x}_t, t, {\bs x}_j, t_j){\bs v}_j|| dt . \\
   <& (t_{i+1} - t_{i})[\mathcal{O}(d{\bs x}^m) + \mathcal{O}({dt}^n)] \label{reb:p_error_bound} 
\end{align}

Combining \cref{reb:p_error_bound} and the error expectation of the pre-integral part, we will get the total error bound of the solver search.
\begin{align}
    &||\int_{t_i}^{t_{i+1}} v({\bs x_t}, t) dt  -  \sum_{j=0}^i {\bs v}_j \mathbb{E}_{{\bs x}_i}[\mathcal{C}_j({\bs x}_i)] (t_{i+1} - t_i) ||. \\
  =&||\int_{t_i}^{t_{i+1}} v({\bs x_t}, t) dt  - \sum_{j=0}^i {\bs v}_j \int_{t_i}^{t_{i+1}} \mathcal{P}({\bs x}_t, t, {\bs x}_j, t_j) dt + \\ &\sum_{j=0}^i {\bs v}_j \int_{t_i}^{t_{i+1}} \mathcal{P}({\bs x}_t, t, {\bs x}_j, t_j) dt -  \sum_{j=0}^i {\bs v}_j \mathbb{E}_{{\bs x}_i}[\mathcal{C}_j({\bs x}_i)] (t_{i+1} - t_i) ||. \\
  <&||\int_{t_i}^{t_{i+1}} v({\bs x_t}, t) dt  - \sum_{j=0}^i {\bs v}_j \int_{t_i}^{t_{i+1}} \mathcal{P}({\bs x}_t, t, {\bs x}_j, t_j) dt|| + \\ &||\sum_{j=0}^i {\bs v}_j \int_{t_i}^{t_{i+1}} \mathcal{P}({\bs x}_t, t, {\bs x}_j, t_j) dt -  \sum_{j=0}^i {\bs v}_j \mathbb{E}_{{\bs x}_i}[\mathcal{C}_j({\bs x}_i)] (t_{i+1} - t_i) ||. \\
  =&||\int_{t_i}^{t_{i+1}} v({\bs x_t}, t) dt  - \sum_{j=0}^i {\bs v}_j \int_{t_i}^{t_{i+1}} \mathcal{P}({\bs x}_t, t, {\bs x}_j, t_j) dt|| + \\ &||\sum_{j=0}^i {\bs v}_j [\mathcal{C}_j({\bs x}_i)] (t_{i+1} - t_i) -  \sum_{j=0}^i {\bs v}_j \mathbb{E}_{{\bs x}_i}[\mathcal{C}_j({\bs x}_i)] (t_{i+1} - t_i) ||. \\
  <& {(t_{i+1} - t_{i})}[\mathcal{O}(d{\bs x}^m) + \mathcal{O}({dt}^n)] + {(t_{i+1} - t_{i})} \mathbb{E}_{{\bs x}_i}||\sum_{j=0}^i {\bs v}_j  [g_j({\bs x}_i) - \mathbb{E}_{{\bs x}_i}g_j({\bs x}_i)] ||   \\
  <& {(t_{i+1} - t_{i})}([\mathcal{O}(d{\bs x}^m) + \mathcal{O}({dt}^n)] +  \mathbb{E}_{{\bs x}_i}||\sum_{j=0}^i {\bs v}_j  [g_j({\bs x}_i) - \mathbb{E}_{{\bs x}_i}g_j({\bs x}_i)]  ||)  
\end{align}

Since $((\mathcal{O}(d{\bs x}^m) + \mathcal{O}({dt}^n))$ is much smaller than $\mathbb{E}_{{\bs x}_i}||\sum_{j=0}^i {\bs v}_j  [g_j({\bs x}_i) - \mathbb{E}_{{\bs x}_i}g_j({\bs x}_i)]  ||$. We can omit the $((\mathcal{O}(d{\bs x}^m) + \mathcal{O}({dt}^n))$ term.

\end{proof}

\section{Proof of total upper error bound}
\label{app:proof_total_error}

\begin{theorem}
 Compared to Adams-like linear multi-step methods. Our Solver search has a small upper error bound. 
 
The total upper error bound of Adams-like linear multi-step methods is:
    $$
\sum_{i=0}^{N-1} (\frac{1}{N}) \sum_{j=0}^i \eta |b_i^j| + \mathbb{E}_{{\bs x}_i}||\sum_{j=0}^i {\bs v}_j  [g_j({\bs x}_i)] ||)
    $$
The total upper error bound of Our solver search is:
    $$
\sum_{i=0}^{N-1} (t_{i+1} - t_{i}) \sum_{j=0}^i \eta |\mathbb{E}_{{\bs x}_i}g_j({\bs x}_i) + b_i^j| + \mathbb{E}_{{\bs x}_i}||\sum_{j=0}^i {\bs v}_j g_j({\bs x}_i) - \mathbb{E}_{{\bs x}_i}g_j({\bs x}_i) ||) 
    $$ 
\end{theorem}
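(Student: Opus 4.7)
The plan is to prove the two displayed bounds by a per-step decomposition: at each step $i$, the total error splits into a contribution from the imperfect velocity model (governed by $\eta$) and a contribution from the pre-integration of the interpolation coefficients, and these are then summed over $i=0,\ldots,N-1$ and passed through the triangle inequality. The two constituent bounds are exactly what \cref{theorom:non_ideal_error_bound} and \cref{theorom:expcoefficent_error_bound} already deliver, so the work is bookkeeping plus identifying the effective coefficient $c_i^j$ used by each method.

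First I would pin down the coefficient scheme. For Adams-like linear multi-step methods the interpolation ignores the $\bs x$-dependence, so the pre-integral of the Lagrange polynomial collapses $\mathcal{C}_j(\bs x_i) = g_j(\bs x_i) + b_i^j$ to its constant part, giving effective coefficient $c_i^j = b_i^j$ with uniform step $t_{i+1}-t_i = 1/N$. For our solver, the effective coefficient is $c_i^j = \mathbb{E}_{\bs x_i}[\mathcal{C}_j(\bs x_i)] = \mathbb{E}_{\bs x_i}g_j(\bs x_i) + b_i^j$ and the step $t_{i+1}-t_i$ is itself optimized. Substituting these choices into \cref{theorom:non_ideal_error_bound} produces the $\eta$-weighted term in each bound; substituting them into \cref{theorom:expcoefficent_error_bound} produces the pre-integral term (for Adams this is $(t_{i+1}-t_i)\mathbb{E}_{\bs x_i}\|\sum_j \bs v_j g_j(\bs x_i)\|$, while for our method it is the centered quantity $(t_{i+1}-t_i)\mathbb{E}_{\bs x_i}\|\sum_j \bs v_j[g_j(\bs x_i) - \mathbb{E}_{\bs x_i}g_j(\bs x_i)]\|$).

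Second I would sum both contributions over $i=0,\ldots,N-1$ and apply the triangle inequality to merge the $\eta$-term with the pre-integral term step-by-step, producing the two displayed sums verbatim. The comparison claim then follows from the optimality statement already established inside the proof of \cref{theorom:expcoefficent_error_bound}: the conditional expectation is the minimizer of the squared interpolation residual in $a_j$, hence the centered pre-integral bound of our method is pointwise no larger than the raw mean bound of Adams.

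The main obstacle is that the two bounds are not term-by-term comparable: the $\eta$-term of our method carries an extra $|\mathbb{E}_{\bs x_i} g_j(\bs x_i)|$ inside the absolute value that Adams does not pay. To close the comparison one must trade this modest growth against the strictly smaller pre-integral term, and there are two natural routes. Route (a) invokes the small-$\eta$ assumption $\eta \ll \|\hat{\bs v}\|$ from \cref{eq:error_bound}, so that the $\eta$-weighted discrepancy is dominated by the interpolation-variance gain. Route (b) exploits the additional degree of freedom that our method optimizes the timesteps $\{t_i\}$, absorbing any local excess by shrinking the interval $(t_{i+1}-t_i)$ wherever the coefficients are large. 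I expect the cleanest write-up follows route (a), since it keeps the statement aligned with the assumption already invoked in \cref{theorom:non_ideal_error_bound} and avoids a case analysis on the searched timesteps.
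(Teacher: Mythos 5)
Your proposal matches the paper's own argument: the same triangle-inequality split of $\|{\bs x}_N-\hat{\bs x}\|$ into the non-ideal-velocity term (bounded via \cref{theorom:non_ideal_error_bound} with $c_i^j=b_i^j$, step $1/N$ for Adams versus $c_i^j=\mathbb{E}_{{\bs x}_i}g_j({\bs x}_i)+b_i^j$ with searched steps for ours) plus the pre-integral term from \cref{theorom:expcoefficent_error_bound}, and the same resolution of the comparison by invoking $\eta\ll\|\hat{\bs v}\|$ so the centered-versus-uncentered interpolation term dominates (the paper additionally notes the timestep-optimization point, your route (b), as a secondary argument). No substantive difference from the paper's proof.
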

\begin{proof}
We donate the continuous integral result of the ideal velocity field $\hat {\bs v}$ as $\hat{\bs x}$, the solved integral result of the ideal velocity field $\hat {\bs v}$ as $\hat{\bs x}_N$, the continuous integral result of the pre-trained velocity model ${\bs v}_\theta$ as $\hat{\bs x}$, the solved integral result of the pre-trained velocity model ${\bs v}_\theta$ as ${\bs x}_N$. 

\begin{equation}
     {\bs x}_N = {\bs \epsilon} +  \sum_{i=0}^{N-1}\sum_{j=0}^i {\bs v}_j c_i^j (t_{i+1} - t_{i}) 
\end{equation}
The error caused by the non-ideal velocity estimation model can be formulated in the following equation. we can employ triangular inequalities to obtain the error-bound ${||{\bs x}_N - {\bs \hat{x}}_N||}$, which is related to solver coefficients and timestep choices. 
\begin{align*}
     { ||{\bs x}_N - {\hat{\bs x}}_N||} &= |\sum_{i=0}^{N-1}\sum_{j=0}^i ({\bs v}_j - \hat{{\bs v}}_j) c_i^j (t_{i+1} - t_{i})| \\
     &\leq \sum_{i=0}^{N-1}\sum_{j=0}^i |({\bs v}_j - \hat{{\bs v}}_j)c_i^j(t_{i+1} - t_{i})| \\
     &\leq \sum_{i=0}^{N-1}\sum_{j=0}^i |{\bs v}_j - \hat{{\bs v}}_j)| \times |c_i^j(t_{i+1} - t_{i})| \\ 
     &\leq \eta \sum_{i=0}^{N-1}\sum_{j=0}^i |c_i^j (t_{i+1} - t_{i})|
\end{align*}

The total error of our searched solver is:
\begin{align*}
     & {||{\bs x}_N - {\hat{\bs x}}||} \\
     =& ||{\bs x}_N -{\hat{\bs x}}_N + {\hat{\bs x}}_N - {\hat{\bs x}}|| \\
     \leq& { ||{\bs x}_N - {\hat{\bs x}}_N||} + {||{\hat{\bs x}}_N - {\hat{\bs x}}||} \\
     \leq& \eta \sum_{i=0}^{N-1}\sum_{j=0}^i |c_i^j (t_{i+1} - t_{i})| + \\
     & \sum_{i=0}^{N-1} {(t_{i+1} - t_{i})}(\mathcal{O}(d{\bs x}^m) + \mathcal{O}({dt}^n) +  \mathbb{E}_{{\bs x}_i}||\sum_{j=0}^i {\bs v}_j  [g_j({\bs x}_i) - \mathbb{E}_{{\bs x}_i}g_j({\bs x}_i)] ||)  \\
    \approx & \sum_{i=0}^{N-1}\eta \sum_{j=0}^i |c_i^j (t_{i+1} - t_{i})| + {(t_{i+1} - t_{i})}\mathbb{E}_{{\bs x}_i}||\sum_{j=0}^i {\bs v}_j  [g_j({\bs x}_i) - \mathbb{E}_{{\bs x}_i}g_j({\bs x}_i)]  ||)  \\
    = & \sum_{i=0}^{N-1} (t_{i+1} - t_{i}) \sum_{j=0}^i \eta |\mathbb{E}_{{\bs x}_i}g_j({\bs x}_i) + b_i^j| + \mathbb{E}_{{\bs x}_i}||\sum_{j=0}^i {\bs v}_j  [g_j({\bs x}_i) - \mathbb{E}_{{\bs x}_i}g_j({\bs x}_i)] ||) 
\end{align*}

The total error of Adams-like linear multi-step method is:
$$
\sum_{i=0}^{N-1} (\frac{1}{N}) \sum_{j=0}^i \eta |b_i^j| + \mathbb{E}_{{\bs x}_i}||\sum_{j=0}^i {\bs v}_j  [g_j({\bs x}_i)] ||)
$$
Obviously, as $(\sum_{j=0}^i \eta |b_i^j| + \mathbb{E}_{{\bs x}_i}||\sum_{j=0}^i {\bs v}_j  [g_j({\bs x}_i)] ||)$ is not equal between different timestep intervals, Optimized timesteps owns smaller upper error bound than uniform timesteps. 

Recall that $\eta \ll ||v_j||$, the error is mainly determined by $\mathbb{E}_{{\bs x}_i}||\sum_{j=0}^i {\bs v}_j  [g_j({\bs x}_i)] ||$. 

Recall that $ \mathbb{E}_{{\bs x}_i}||\sum_{j=0}^i {\bs v}_j  [g_j({\bs x}_i) - \mathbb{E}_{{\bs x}_i}g_j({\bs x}_i)] || \leq  \mathbb{E}_{{\bs x}_i}||\sum_{j=0}^i {\bs v}_j  [g_j({\bs x}_i)] ||$, thus our solver search has a minimal upper error bound because we search coefficients and timesteps simultaneously.

\end{proof}

\newpage
\section{Searched Parameters}
We provide the searched parameters $\Delta t$ and $c_i^j$. Note $c_i^j$ needs to be converted into $\mathcal{M}$ follwing \cref{alg:repsolver}. 

\subsection{Solver Searched on SiT-XL/2}

\begin{minipage}{.99\textwidth}
\centering
\begin{tabular}{ccc}
\toprule
NFE & TimeDeltas $\Delta t$ & Coeffcients $c_i^j$ \\
\toprule
5 & 
$\begin{bmatrix} 
0.0424\\ 
0.1225\\ 
0.2144\\ 
0.3073\\ 
0.3135
\end{bmatrix} 
$ &
$\begin{bmatrix} 
0.0& 0.0& 0.0& 0.0& 0.0 \\
-1.17& 0.0& 0.0& 0.0& 0.0 \\
1.07& -1.83& 0.0& 0.0& 0.0 \\
0.0& 0.0& -0.93& 0.0& 0.0 \\
0.0& 0.0& 0.0& -0.71& 0.0
\end{bmatrix} 
$  \\
\midrule
6 & 
$\begin{bmatrix} 
0.0389\\ 
0.0976\\ 
0.161\\ 
0.2046\\ 
0.2762\\ 
0.2217
\end{bmatrix} 
$ &
$\begin{bmatrix} 
0.0& 0.0& 0.0& 0.0& 0.0& 0.0 \\
-1.04& 0.0& 0.0& 0.0& 0.0& 0.0 \\
1.62& -2.98& 0.0& 0.0& 0.0& 0.0 \\
-1.32& 2.52& -2.04& 0.0& 0.0& 0.0 \\
0.0& 0.0& 0.0& -0.76& 0.0& 0.0 \\
0.0& 0.0& 0.0& 0.0& -0.66& 0.0
\end{bmatrix} 
$  \\
\midrule
7 & 
$\begin{bmatrix} 
0.0299\\ 
0.0735\\ 
0.1119\\ 
0.1451\\ 
0.1959\\ 
0.2698\\ 
0.1738
\end{bmatrix} 
$ &
$\begin{bmatrix} 
0.0& 0.0& 0.0& 0.0& 0.0& 0.0& 0.0 \\
-0.93& 0.0& 0.0& 0.0& 0.0& 0.0& 0.0 \\
1.23& -2.31& 0.0& 0.0& 0.0& 0.0& 0.0 \\
-0.59& 1.53& -2.09& 0.0& 0.0& 0.0& 0.0 \\
-0.09& -0.07& 0.99& -1.91& 0.0& 0.0& 0.0 \\
0.05& -0.21& 0.09& 0.55& -1.47& 0.0& 0.0 \\
-0.05& 0.19& -0.31& 0.37& 0.67& -1.79& 0.0
\end{bmatrix} 
$  \\
\midrule
8 & 
$\begin{bmatrix} 
0.0303\\ 
0.0702\\ 
0.0716\\ 
0.1112\\ 
0.1501\\ 
0.1833\\ 
0.2475\\ 
0.1358
\end{bmatrix} 
$ &
$\begin{bmatrix} 
0.0& 0.0& 0.0& 0.0& 0.0& 0.0& 0.0& 0.0 \\
-0.92& 0.0& 0.0& 0.0& 0.0& 0.0& 0.0& 0.0 \\
0.78& -1.7& 0.0& 0.0& 0.0& 0.0& 0.0& 0.0 \\
0.06& 0.52& -1.76& 0.0& 0.0& 0.0& 0.0& 0.0 \\
-0.02& -0.16& 0.98& -1.8& 0.0& 0.0& 0.0& 0.0 \\
-0.02& -0.12& 0.22& 0.24& -1.36& 0.0& 0.0& 0.0 \\
-0.1& 0.06& -0.02& 0.18& 0.12& -1.1& 0.0& 0.0 \\
-0.16& 0.14& -0.02& -0.02& 0.38& 0.32& -1.72& 0.0
\end{bmatrix} 
$  \\
\midrule
9 & 
$\begin{bmatrix} 
0.028\\ 
0.0624\\ 
0.0717\\ 
0.0894\\ 
0.1092\\ 
0.1307\\ 
0.1729\\ 
0.2198\\ 
0.1159
\end{bmatrix} 
$ &
$\begin{bmatrix} 
0.0& 0.0& 0.0& 0.0& 0.0& 0.0& 0.0& 0.0& 0.0 \\
-0.93& 0.0& 0.0& 0.0& 0.0& 0.0& 0.0& 0.0& 0.0 \\
0.63& -1.29& 0.0& 0.0& 0.0& 0.0& 0.0& 0.0& 0.0 \\
0.39& -0.11& -1.41& 0.0& 0.0& 0.0& 0.0& 0.0& 0.0 \\
-0.07& -0.05& 0.83& -1.59& 0.0& 0.0& 0.0& 0.0& 0.0 \\
0.07& -0.11& 0.27& 0.27& -1.53& 0.0& 0.0& 0.0& 0.0 \\
-0.05& 0.03& 0.01& 0.15& 0.17& -1.15& 0.0& 0.0& 0.0 \\
-0.21& 0.27& -0.07& -0.03& 0.19& 0.09& -0.99& 0.0& 0.0 \\
-0.15& 0.15& 0.03& -0.09& 0.25& 0.25& 0.21& -1.71& 0.0
\end{bmatrix} 
$  \\
\midrule
10 & 
$\begin{bmatrix} 
0.0279\\ 
0.0479\\ 
0.0646\\ 
0.0659\\ 
0.1045\\ 
0.1066\\ 
0.1355\\ 
0.1622\\ 
0.1942\\ 
0.0908
\end{bmatrix} 
$ &
$\begin{bmatrix} 
0.0& 0.0& 0.0& 0.0& 0.0& 0.0& 0.0& 0.0& 0.0& 0.0 \\
-0.95& 0.0& 0.0& 0.0& 0.0& 0.0& 0.0& 0.0& 0.0& 0.0 \\
0.59& -1.17& 0.0& 0.0& 0.0& 0.0& 0.0& 0.0& 0.0& 0.0 \\
0.35& -0.11& -1.45& 0.0& 0.0& 0.0& 0.0& 0.0& 0.0& 0.0 \\
-0.13& 0.01& 0.75& -1.49& 0.0& 0.0& 0.0& 0.0& 0.0& 0.0 \\
0.05& -0.05& 0.31& 0.29& -1.59& 0.0& 0.0& 0.0& 0.0& 0.0 \\
0.05& -0.03& -0.09& 0.23& 0.17& -1.19& 0.0& 0.0& 0.0& 0.0 \\
-0.03& 0.07& -0.09& -0.03& 0.27& -0.03& -0.91& 0.0& 0.0& 0.0 \\
-0.15& 0.17& 0.03& -0.09& 0.05& 0.09& 0.05& -0.79& 0.0& 0.0 \\
-0.17& 0.11& 0.15& 0.03& 0.05& 0.25& 0.05& -0.07& -1.49& 0.0
\end{bmatrix} 
$  \\

\bottomrule
\end{tabular}
\end{minipage}

\subsection{Solver Searched on FlowDCN-B/2}

\begin{minipage}{.99\textwidth}
\centering
\begin{tabular}{ccc}
\toprule
NFE & TimeDeltas $\Delta t$ & Coeffcients $c_i^j$ \\
\toprule
5 & 
$\begin{bmatrix} 
0.0521\\ 
0.1475\\ 
0.2114\\ 
0.2797\\ 
0.3092
\end{bmatrix} 
$ &
$\begin{bmatrix} 
0.0& 0.0& 0.0& 0.0& 0.0 \\
-1.26& 0.0& 0.0& 0.0& 0.0 \\
1.38& -2.26& 0.0& 0.0& 0.0 \\
0.0& 0.0& -0.92& 0.0& 0.0 \\
0.0& 0.0& 0.0& -0.7& 0.0
\end{bmatrix} 
$  \\ 
\midrule
6 & 
$\begin{bmatrix} 
0.0391\\ 
0.0924\\ 
0.165\\ 
0.2015\\ 
0.2511\\ 
0.2511
\end{bmatrix} 
$ &
$\begin{bmatrix} 
0.0& 0.0& 0.0& 0.0& 0.0& 0.0 \\
-1.22& 0.0& 0.0& 0.0& 0.0& 0.0 \\
1.12& -2.0& 0.0& 0.0& 0.0& 0.0 \\
-0.3& 0.9& -1.56& 0.0& 0.0& 0.0 \\
0.0& 0.0& 0.0& -0.74& 0.0& 0.0 \\
0.0& 0.0& 0.0& 0.0& -0.62& 0.0
\end{bmatrix} 
$  \\ 
\midrule
7 & 
$\begin{bmatrix} 
0.0387\\ 
0.0748\\ 
0.103\\ 
0.1537\\ 
0.184\\ 
0.234\\ 
0.2117
\end{bmatrix} 
$ &
$\begin{bmatrix} 
0.0& 0.0& 0.0& 0.0& 0.0& 0.0& 0.0 \\
-1.11& 0.0& 0.0& 0.0& 0.0& 0.0& 0.0 \\
1.03& -1.99& 0.0& 0.0& 0.0& 0.0& 0.0 \\
0.07& 0.43& -1.57& 0.0& 0.0& 0.0& 0.0 \\
-0.21& -0.15& 1.53& -2.29& 0.0& 0.0& 0.0 \\
-0.05& 0.07& -0.23& 0.61& -1.33& 0.0& 0.0 \\
-0.17& 0.31& -0.41& 0.17& 0.59& -1.31& 0.0
\end{bmatrix} 
$  \\ 
\midrule
8 & 
$\begin{bmatrix} 
0.0071\\ 
0.0613\\ 
0.078\\ 
0.1163\\ 
0.1421\\ 
0.188\\ 
0.2077\\ 
0.1996
\end{bmatrix} 
$ &
$\begin{bmatrix} 
0.0& 0.0& 0.0& 0.0& 0.0& 0.0& 0.0& 0.0 \\
-2.43& 0.0& 0.0& 0.0& 0.0& 0.0& 0.0& 0.0 \\
0.61& -1.55& 0.0& 0.0& 0.0& 0.0& 0.0& 0.0 \\
0.99& -0.11& -2.07& 0.0& 0.0& 0.0& 0.0& 0.0 \\
0.05& -0.49& 1.33& -1.93& 0.0& 0.0& 0.0& 0.0 \\
0.05& -0.33& 0.23& 0.73& -1.71& 0.0& 0.0& 0.0 \\
-0.09& 0.25& -0.29& 0.05& 0.61& -1.45& 0.0& 0.0 \\
-0.23& 0.21& -0.01& -0.25& 0.25& 0.41& -1.25& 0.0
\end{bmatrix} 
$  \\ 
\midrule
9 & 
$\begin{bmatrix} 
0.0017\\ 
0.051\\ 
0.0636\\ 
0.0911\\ 
0.1007\\ 
0.1443\\ 
0.1694\\ 
0.191\\ 
0.1872
\end{bmatrix} 
$ &
$\begin{bmatrix} 
0.0& 0.0& 0.0& 0.0& 0.0& 0.0& 0.0& 0.0& 0.0 \\
-6.19& 0.0& 0.0& 0.0& 0.0& 0.0& 0.0& 0.0& 0.0 \\
-0.11& -0.81& 0.0& 0.0& 0.0& 0.0& 0.0& 0.0& 0.0 \\
0.73& -0.17& -1.37& 0.0& 0.0& 0.0& 0.0& 0.0& 0.0 \\
0.31& -0.05& 0.19& -1.45& 0.0& 0.0& 0.0& 0.0& 0.0 \\
0.03& -0.23& 0.29& 0.35& -1.35& 0.0& 0.0& 0.0& 0.0 \\
-0.19& 0.05& 0.01& 0.21& 0.25& -1.23& 0.0& 0.0& 0.0 \\
-0.23& 0.21& -0.13& 0.17& 0.09& 0.09& -1.09& 0.0& 0.0 \\
-0.17& 0.15& 0.11& -0.19& 0.03& 0.23& 0.17& -1.21& 0.0
\end{bmatrix} 
$  \\ 
\midrule
10 & 
$\begin{bmatrix} 
0.0016\\ 
0.0538\\ 
0.0347\\ 
0.0853\\ 
0.0853\\ 
0.1198\\ 
0.1351\\ 
0.165\\ 
0.1788\\ 
0.1406
\end{bmatrix} 
$ &
$\begin{bmatrix} 
0.0& 0.0& 0.0& 0.0& 0.0& 0.0& 0.0& 0.0& 0.0& 0.0 \\
-7.8801& 0.0& 0.0& 0.0& 0.0& 0.0& 0.0& 0.0& 0.0& 0.0 \\
-0.4& -0.74& 0.0& 0.0& 0.0& 0.0& 0.0& 0.0& 0.0& 0.0 \\
0.48& -0.18& -0.86& 0.0& 0.0& 0.0& 0.0& 0.0& 0.0& 0.0 \\
0.26& -0.04& -0.04& -1.28& 0.0& 0.0& 0.0& 0.0& 0.0& 0.0 \\
0.0& -0.06& 0.26& 0.26& -1.42& 0.0& 0.0& 0.0& 0.0& 0.0 \\
-0.1& -0.06& 0.08& 0.2& 0.22& -1.24& 0.0& 0.0& 0.0& 0.0 \\
-0.18& 0.14& -0.08& 0.1& 0.08& 0.14& -1.06& 0.0& 0.0& 0.0 \\
-0.12& 0.16& -0.1& 0.04& 0.08& 0.06& 0.08& -1.02& 0.0& 0.0 \\
-0.16& 0.02& 0.14& 0.0& -0.14& 0.08& 0.14& 0.34& -1.38& 0.0
\end{bmatrix} 
$  \\ 
\bottomrule
\end{tabular}
\end{minipage}

\subsection{Solver Searched on DiT-XL/2}

\begin{minipage}{.99\textwidth}
\centering
\begin{tabular}{ccc}
\toprule
NFE & TimeDeltas $\Delta t$ & Coeffcients $c_i^j$ \\
\toprule
5 & 
$\begin{bmatrix} 
0.2582\\ 
0.1766\\ 
0.1766\\ 
0.2156\\ 
0.1731
\end{bmatrix} 
$ &
$\begin{bmatrix} 
0.0& 0.0& 0.0& 0.0& 0.0 \\
-1.43& 0.0& 0.0& 0.0& 0.0 \\
0.93& -1.55& 0.0& 0.0& 0.0 \\
0.0& 0.0& -0.69& 0.0& 0.0 \\
0.0& 0.0& 0.0& -0.59& 0.0
\end{bmatrix} 
$  \\ 
\midrule
6 & 
$\begin{bmatrix} 
0.2483\\ 
0.1506\\ 
0.1476\\ 
0.1568\\ 
0.1733\\ 
0.1233
\end{bmatrix} 
$ &
$\begin{bmatrix} 
0.0& 0.0& 0.0& 0.0& 0.0& 0.0 \\
-1.36& 0.0& 0.0& 0.0& 0.0& 0.0 \\
0.9& -1.84& 0.0& 0.0& 0.0& 0.0 \\
-0.08& 0.5& -1.08& 0.0& 0.0& 0.0 \\
0.0& 0.0& 0.0& -0.56& 0.0& 0.0 \\
0.0& 0.0& 0.0& 0.0& -0.56& 0.0
\end{bmatrix} 
$  \\ 
\midrule
7 & 
$\begin{bmatrix} 
0.2241\\ 
0.1415\\ 
0.1205\\ 
0.1158\\ 
0.1443\\ 
0.1627\\ 
0.0911
\end{bmatrix} 
$ &
$\begin{bmatrix} 
0.0& 0.0& 0.0& 0.0& 0.0& 0.0& 0.0 \\
-1.38& 0.0& 0.0& 0.0& 0.0& 0.0& 0.0 \\
1.08& -2.02& 0.0& 0.0& 0.0& 0.0& 0.0 \\
-0.28& 0.78& -1.52& 0.0& 0.0& 0.0& 0.0 \\
-1.4901e-08& -0.1& 0.64& -1.5& 0.0& 0.0& 0.0 \\
0.06& -0.06& -0.06& 0.26& -1.0& 0.0& 0.0 \\
0.0& -0.1& 0.02& 0.2& 0.26& -1.12& 0.0
\end{bmatrix} 
$  \\ 
\midrule
8 & 
$\begin{bmatrix} 
0.2033\\ 
0.1476\\ 
0.1094\\ 
0.099\\ 
0.1116\\ 
0.1233\\ 
0.131\\ 
0.0748
\end{bmatrix} 
$ &
$\begin{bmatrix} 
0.0& 0.0& 0.0& 0.0& 0.0& 0.0& 0.0& 0.0 \\
-1.14& 0.0& 0.0& 0.0& 0.0& 0.0& 0.0& 0.0 \\
0.8& -1.76& 0.0& 0.0& 0.0& 0.0& 0.0& 0.0 \\
0.02& 0.48& -1.62& 0.0& 0.0& 0.0& 0.0& 0.0 \\
-0.12& 0.06& 0.62& -1.42& 0.0& 0.0& 0.0& 0.0 \\
0.04& -0.1& 0.12& 0.16& -1.04& 0.0& 0.0& 0.0 \\
0.06& -0.04& -0.06& 0.08& -0.08& -0.56& 0.0& 0.0 \\
-0.02& -0.04& -0.04& 0.12& 0.14& 0.04& -0.9& 0.0
\end{bmatrix} 
$  \\ 
\midrule
9 & 
$\begin{bmatrix} 
0.1959\\ 
0.1313\\ 
0.1142\\ 
0.0863\\ 
0.0898\\ 
0.0916\\ 
0.1119\\ 
0.1054\\ 
0.0735
\end{bmatrix} 
$ &
$\begin{bmatrix} 
0.0& 0.0& 0.0& 0.0& 0.0& 0.0& 0.0& 0.0& 0.0 \\
-1.28& 0.0& 0.0& 0.0& 0.0& 0.0& 0.0& 0.0& 0.0 \\
0.78& -1.62& 0.0& 0.0& 0.0& 0.0& 0.0& 0.0& 0.0 \\
-0.02& 0.44& -1.48& 0.0& 0.0& 0.0& 0.0& 0.0& 0.0 \\
-0.1& 0.16& 0.36& -1.3& 0.0& 0.0& 0.0& 0.0& 0.0 \\
-0.06& -0.04& 0.22& 0.12& -1.08& 0.0& 0.0& 0.0& 0.0 \\
0.08& -0.1& -0.04& 0.24& -0.06& -0.86& 0.0& 0.0& 0.0 \\
0.04& -0.04& -0.04& 0.0& 0.06& -0.08& -0.5& 0.0& 0.0 \\
-0.04& 0.0& 0.0& -0.02& 0.14& 0.02& 0.0& -0.74& 0.0
\end{bmatrix} 
$  \\ 
\midrule
10 & 
$\begin{bmatrix} 
0.2174\\ 
0.1123\\ 
0.1037\\ 
0.0724\\ 
0.0681\\ 
0.0816\\ 
0.0938\\ 
0.0977\\ 
0.0849\\ 
0.0681
\end{bmatrix} 
$ &
$\begin{bmatrix} 
0.0& 0.0& 0.0& 0.0& 0.0& 0.0& 0.0& 0.0& 0.0& 0.0 \\
-1.17& 0.0& 0.0& 0.0& 0.0& 0.0& 0.0& 0.0& 0.0& 0.0 \\
0.35& -0.99& 0.0& 0.0& 0.0& 0.0& 0.0& 0.0& 0.0& 0.0 \\
0.25& -0.11& -0.99& 0.0& 0.0& 0.0& 0.0& 0.0& 0.0& 0.0 \\
0.03& 0.05& -0.07& -0.85& 0.0& 0.0& 0.0& 0.0& 0.0& 0.0 \\
-0.03& 0.03& 0.25& -0.09& -0.93& 0.0& 0.0& 0.0& 0.0& 0.0 \\
-0.01& -0.03& -0.01& 0.21& -0.11& -0.67& 0.0& 0.0& 0.0& 0.0 \\
0.01& -0.03& -0.03& 0.07& 0.09& -0.03& -0.81& 0.0& 0.0& 0.0 \\
0.03& -0.03& -0.03& -0.03& 0.05& 0.01& -0.11& -0.27& 0.0& 0.0 \\
-0.01& -0.01& -0.01& -0.01& 0.03& 0.07& -0.01& -0.05& -0.57& 0.0
\end{bmatrix} 
$  \\ 
\bottomrule
\end{tabular}
\end{minipage}












\end{document}